\begin{document}

\title{Top-$K$ Ranking from Pairwise Comparisons: When Spectral Ranking is Optimal}

\author{\name Minje Jang \email jmj427@kaist.ac.kr \\
       \addr Electrical Engineering, KAIST\\
       \AND
       \name Sunghyun Kim \email koishkim@etri.re.kr \\
       \addr Electronics and Telecommunications Research Institute\\
       Daejeon, Korea
       \AND
       \name Changho Suh \email chsuh@kaist.ac.kr \\
       \addr Electrical Engineering, KAIST\\
       \AND
       \name Sewoong Oh \email swoh@illinois.edu \\
       \addr Industrial and Enterprise Systems Engineering, UIUC\\
}

\editor{}

\maketitle

\begin{abstract}
We explore the top-$K$ rank aggregation problem. Suppose a collection of items is compared in pairs repeatedly, and we aim to recover a consistent ordering that focuses on the top-$K$ ranked items based on partially revealed preference information. We investigate the Bradley-Terry-Luce model in which one ranks items according to their perceived utilities modeled as noisy observations of their underlying true utilities. Our main contributions are two-fold. First, in a general comparison model where item pairs to compare are given a priori, we attain an upper and lower bound on the sample size for reliable recovery of the top-$K$ ranked items. Second, more importantly, extending the result to a random comparison model where item pairs to compare are chosen independently with some probability, we show that in slightly restricted regimes, the gap between the derived bounds reduces to a constant factor, hence reveals that a spectral method can achieve the minimax optimality on the (order-wise) sample size required for top-$K$ ranking. That is to say, we demonstrate a spectral method alone to be sufficient to achieve the optimality and advantageous in terms of computational complexity, as it does not require an additional stage of maximum likelihood estimation that a state-of-the-art scheme employs to achieve the optimality. We corroborate our main results by numerical experiments.
\end{abstract}

\begin{keywords}
Bradley-Terry-Luce models, Optimal sample complexity, Pairwise measurements, Spectral methods, Top-$K$ ranking.
\end{keywords}

\section{Introduction}
\label{sec:introduction}
Rank aggregation has been investigated in a variety of contexts such as social choice \citep{Caplin91, Soufiani14}, web search and information retrieval \citep{Dwork01}, recommendation systems \citep{Baltrunas10}, and crowd sourcing \citep{Chen13}, to name a few. The task aims to bring a consistent ordering to a collection of items, given only partial preference information.

Due to its broad range of applications, a sheer volume of work on ranking has been done. Of numerous ranking schemes developed in the literature, arguably most dominant paradigms are spectral ranking algorithms \citep{Brin98, Dwork01, Negahban12, See49, Wei52, Vig09} and maximum likelihood estimation (MLE) \citep{Ford57, Hunter04}. Postulating the existence of underlying real-valued true preferences of the items, these paradigms intend to produce preference estimates that are consistent in a global sense, usually measured by $\ell_2$ estimation error, to order the items. While it can be understood that such estimates are faithful globally with respect to the latent preferences, it is not necessarily guaranteed that they result in optimal ranking accuracy. Accurate ranking has more to do with how well the ordering of the estimates matches that of the true preferences, and less to do with how close the estimates are to the true preferences minimizing overall estimation error.

In many realistic applications of interest, however, what we expect from accurate ranking is not an ordering that respects the entire item preferences in a global sense. Instead, we expect an ordering that precisely separates only a few items that have the highest ranks from the rest. In light of this, recent work \citep{Chen15} investigated top-$K$ identification which aims to recover the correct set of top-ranked items only. As a result, it characterized the minimax limit on the sample size (i.e., sample complexity) under a long-lasting prominent statistical model, namely the Bradley-Terry-Luce (BTL) model \citep{BT52, Luce59}. In achieving the fundamental limit, its proposed scheme called {\it Spectral MLE} merges the two popular paradigms in series so as to yield low $\ell_\infty$ estimates, shown therein to be crucial in identifying top-ranked items with respect to their preferences. To start with, {\it Spectral MLE} first obtains preference estimates via a spectral method, particularly {\it Rank Centrality} \citep{Negahban12}, which produces estimates with low squared loss. And by performing additional point-wise MLEs on the estimates, it makes them have low $\ell_\infty$ estimation error, leading to successful top-$K$ ranking.

Analyzing $\ell_\infty$ error bounds can be interesting, as we can see in \citep{Chen15} where it has led to characterizing the minimax limit on the sample size for top-$K$ ranking. What makes it even more appealing is its technical challenge. Even after decades of research since the introduction of spectral methods and MLE, two dominating approaches in the literature, yet we lack notable results for $\ell_\infty$ error bounds. Analytical techniques that have proven useful to obtain tight $\ell_2$ error bounds do not translate well into obtaining meaningful $\ell_\infty$ error bounds. There lie our main contributions: tangible progress in $\ell_\infty$ analysis.

{\bf Main contributions.} In this work, we provide a tight analysis of $\ell_\infty$ error bounds of a spectral method, making progress toward richer understanding of rank aggregation. The analysis makes it possible for us to characterize conditions under which the spectral method achieves the minimax optimal performance, by comparing it with a fundamental bound  that delineates the performance limit beyond which any ranking algorithm cannot achieve. To be more concrete, we investigate reliable recovery of top-$K$
ranked items under the BTL pairwise comparison model in which one ranks items according to their perceived utilities modeled as noisy observations of their underlying true utilities. We consider mainly two comparison models: one is a deterministic model in which item pairs we compare are given a priori; the other is a random model in which item pairs we compare are chosen in a random and non-adaptive manner. As our main results, in the former model, we derive an upper and lower bound on the sample size for reliable recovery of top-$K$ ranked items (Theorems~\ref{thm:generalgraph} and~\ref{thm:generalconverse}), which respectively correspond to sufficient and necessary conditions for reliable top-$K$ identification, when a spectral method \emph{Rank Centrality} is employed. Inspecting the gap between the derived bounds allows us to identify conditions under which \emph{Rank Centrality} can be optimal. We observe that, for well-balanced cases where the number of distinct items that an item is compared to (which we call degree) does not deviate greatly from its minimum to its maximum, how the gap scales can be nicely expressed in terms of degree (details in Section~\ref{sec:mainresults} after Theorem~\ref{thm:generalconverse}). In the random model we consider, item pairs we compare are first chosen independently with probability $p$ and the chosen pairs are repeatedly compared (hence random and non-adaptive). Finding the random model fit for the well-balanced case, we extend the aforementioned results and get a stronger one. We demonstrate that the gap shrinks to the order of constant (Theorem~\ref{thm:ergraph}), hence show that a spectral method alone can achieve the order-wise optimal sample complexity for top-$K$ ranking that has recently been characterized under the same model in \citep{Chen15}. There are two distinctions to note in comparison with the results in \citep{Chen15}. First, we show that a spectral method can achieve the limit in so-called dense regimes where the number of distinct item pairs we compare is somewhat large. That is, in comparison to the regimes in which Chen and Suh characterized the limit, the regimes in which we achieve it are slightly restricted. Second, we show that applying only \emph{Rank Centrality} is sufficient to achieve the limit in the regimes mentioned earlier, hence is more advantageous in terms of computational complexity in comparison to {\it Spectral MLE} that merges a spectral method (particularly \emph{Rank Centrality} in \citep{Negahban12}) and an additional stage performing coordinate-wise MLEs. 

{\bf Related work.} Perhaps most relevant are \citep{Chen15} and \citep{Negahban12}. To the best of our knowledge, Chen and Suh focused on top-$K$ identification under the random comparison model of our interest for the first time. A key distinction with our work is that while we employ only a spectral method to obtain bounds on $\ell_\infty$ estimation error, they incorporated an additional refinement stage that performs successive point-wise MLEs. 
Negahban et al. developed {\it Rank Centrality} on which our proposed ranking scheme is solely based. 
Perhaps surprisingly, it was proved that {\it Rank Centrality}, 
a simple spectral method, achieves the same performance as MLE in $\ell_2$ error. 
A priori, there is no reason to believe that a spectral method can achieve such a strong minimax optimal performance.
In a similar spirit, we show that this spectral method is also minimax optimal in $\ell_\infty$ error, 
achieving the same optimality guarantee as the MLE based algorithm in \citep{Chen15}. 
The main objective of our work is in identifying the regimes where spectral methods are as good as MLE, 
and proving the minimax optimality of {\it Rank Centrality} in those regimes.


\citet{MG15} recently developed an algorithm that also shares a spirit of spectral ranking, called \emph{Iterative Luce Spectral Ranking (I-LSR)}, and showed its performance to be the same as MLE for underlying preference scores. \citet{Rajkumar14} put forth statistical assumptions that ensure the convergence of several rank aggregation methods including \emph{Rank Centrality} and MLE to an optimal ranking. They derived sample complexity bounds, although the statistical optimality is not rigorously justified and total ordering instead of top-$K$ ranking is concerned. In the pairwise preference setting, many works with different interests from ours have been done. Some studied \emph{active} ranking where samples are obtained adaptively. \citet{JN11} considered perfect total ranking and characterized the query complexity gain of adaptive sampling in the noise-free case, and the works of \citep{Braverman08, JN11, Ailon12, Wauthier13} explored the query complexity in the presence of noise while aiming at approximate total rankings. \citet{Eriksson13} proposed a scheme that intends to find top-$K$ queries when observation errors are assumed to be i.i.d. Some works looked into models different from the BTL model. The works of \citep{Lu11, Busa14} considered ranking problems with pairwise comparison data under the Mallows model \citep{Mallows57}. \citet{Soufiani13} broke full rankings into pairwise comparisons toward parameter estimation under the Plackett-Luce (PL) model \citep{Plackett75}. \citet{Hajek14}, under the PL model, derived minimax lower bounds of parameter estimation error when schemes that break partial rankings into pairwise comparisons are used.

Very recently, \citet{Shah15} showed that a simple counting method \citep{Borda781} can achieve the fundamental limit on the sample size, up to constant factors, for top-$K$ ranking under a general parametric model in which observations depend only on the predefined probabilities of one item preferred to another \citep{Shah152}, including the BTL model as a special case. However, their assumption that the number of comparisons for each item pair follows a Binomial distribution led to a nearly complete observation model where almost every item pair is compared at least once. On the contrary, we examine a different observation model (which we will describe in detail soon), also considered in \citep{Negahban12} and \citep{Chen15}, which well captures the comparison graph structure that affects the sample complexity (see Theorem 1 of \citep{Negahban12} and numerical experiments in Section~\ref{sec:simulations}).


{\bf Notation.} Unless specified otherwise, we use $[n]$ to represent $\{ 1, 2, \dots, n \}$, and $\mathcal{G}_{n,p}$ to represent an Erd{\H o}s-R{\'e}nyi random graph where total $n$ vertices reside and each pair of vertices is connected by an edge independently with probability $p$, and $d_{i}$ to represent the out-degrees of vertex $i$.
%
%
%
%

 

\section{Problem Formulation} 
\label{sec:model}

{\bf Comparison model and assumptions.}
Suppose we perform a few pairwise evaluations on $n$ items. To gain a statistical understanding toward the ranking limits, we assume the pairwise comparison outcomes are generated based on the Bradley-Terry-Luce (BTL) model \citep{BT52, Luce59}, a long-established model that has been studied in numerous applications \citep{Agresti14, Hunter04}.

\begin{itemize}
\item \textit{Preference scores.} The BTL model postulates the existence of an underlying preference vector $\boldsymbol{w} := \{ w_1, w_2, \dots, w_n \}$, where $w_i$ represents the preference score of item $i$. The outcome of each pairwise comparison depends solely on the latent scores of the items being compared. Without loss of generality, we assume that
\begin{align}
w_1 \geq w_2 \geq \cdots \geq w_n > 0.
\end{align}
We assume that the range of the scores is fixed irrespective of $n$. For some positive constants $w_{\rm min}$ and $w_{\rm max}$:
\begin{align}
w_i \in [w_{\rm min}, w_{\rm max}], \quad 1 \leq i \leq n,
\end{align}
In fact, the case in which the range $\frac{w_{\rm max}}{w_{\rm min}}$ grows with $n$ can be translated into the above fixed-range regime by separating out those items with vanishing scores (e.g. via a voting method like Borda count \citep{Borda781, Ammar11}).

\item \textit{Comparison model.} We denote by $\mathcal{G} = ([n], \mathcal{E})$ a comparison graph in which items $i$ and $j$ are compared if and only if $(i,j)$ belongs to the edge set $\mathcal{E}$. We take into account two kinds of comparison graphs. We examine general comparison graphs which can exhibit all possible topologies described by an edge set $\mathcal{E}$ given a vertice set $[n]$. Furthermore, we investigate random comparison graphs constructed by the Erd{\H o}s-R{\'e}nyi random graph model in which each pair of vertices is connected by an edge independently with probability $p$.

\item \textit{Pairwise comparisons.} For each $(i,j) \in \mathcal{E}$, we observe $L$ comparisons between items $i$ and $j$. The outcome of the $\ell^{\rm th}$ comparison between them, denoted by $y_{ij}^{(\ell)}$, is generated based on the BTL model:
\begin{align}
y_{ij}^{(\ell)} = \left\{  \begin{array}{l} 1 \quad \text{with probability}~\frac{w_i}{w_i+w_j} \\ 0 \quad \text{otherwise,} \end{array} \right.
\end{align}
where $y_{ij}^{(\ell)} = 1$ indicates that item $i$ is preferred over item $j$. We adopt the convention $y_{ji}^{(\ell)} = 1 - y_{ij}^{(\ell)}$. We assume that conditional on $\mathcal{G}$, $y_{ij}^{(\ell)}$'s are jointly independent over all $\ell$ and $i<j$. For ease of presentation, we represent the collection of sufficient statistics as
\begin{align} \label{def: observation}
\boldsymbol{y} := \{ y_{ij} : i < j, \in \mathcal{E} \}, ~y_{ij} := \frac{1}{L} \sum_{\ell=1}^{L} y_{ij}^{(\ell)}.
\end{align}
\end{itemize}

{\bf Performance metric and goal.}
Given the pairwise comparisons, one wishes to know whether or not the top-$K$ ranked items are identifiable. In light of this, we consider the probability of error $P_{\sf e}$ in identifying the correct \emph{set} of the top-$K$ ranked items, namely,
\begin{align}
P_{e}(\psi) := \mathbb{P} \left\{ \psi(\boldsymbol{y}) \neq [K] \right\},
\end{align} 
where $\psi$ is any ranking scheme that returns a set of $K$ indices and $[K]$ is the set of the first $K$ indices. Our goal in this work is to characterize the \emph{admissible region} ${\cal R}_{\boldsymbol{w}}$ of $L$ in which top-$K$ ranking is feasible for a given BTL parameter $\boldsymbol{w}$, in other words, $P_{e}$ can be vanishingly small as $n$ grows. The admissible region $\cal{R}_{\boldsymbol{w}}$ is defined as
\begin{align} \label{def:region}
\mathcal{R}_{\boldsymbol{w}} := \left\{ L : \lim_{n \to \infty} P_{e}(\psi(\boldsymbol{y})) = 0 \right\}.
\end{align}

For a comparison graph $\mathcal{G} = ([n], \mathcal{E})$, we are interested in the \emph{sample complexity} defined as,
\begin{align} \label{def:complexity}
S_{\delta} := \underset{L \in \mathbb{Z}^+}{\min} \underset{\boldsymbol{a} \in \Omega_\delta}{\sup} \left\{ L|\mathcal{E}| : L \in {\cal R}_{\boldsymbol{a}} \right\},
\end{align}
where $\Omega_\delta = \{ \boldsymbol{a} \in \mathbb{R}^n : (a_K - a_{K+1} )/ a_{\rm max} \geq \delta \}$. Note that the way the sample complexity is defined as (\ref{def:complexity}) shows that we investigate minimax scenarios in which nature may behave in an adversarial manner with the worst-case preference scores $\boldsymbol{w}$.



\section{Main Results}
\label{sec:mainresults}
The most crucial part of top-$K$ ranking hinges on separating the two items near the decision boundary, i.e., the $K^{\rm th}$ and $(K+1)^{\rm th}$ ranked items. Unless the gap is large enough, noise in the observations can lead to erroneous estimates. In view of this, we pinpoint a separation measure as
\begin{align}
\Delta_K := \frac{w_K - w_{K+1}}{w_{\rm max}}.
\end{align}
This measure turns out to play a key role in determining the fundamental limits of top-$K$ identification.

As noted in \citep{Ford57}, if the comparison graph $\mathcal{G}$ is not connected, then it is impossible to determine the relative preferences between two disconnected components. Hence, we assume all comparison graphs considered in this paper are connected. For Erd{\H o}s-R{\'e}nyi model, we make the following assumption for the connectivity:
\begin{align}
p > \frac{\log n}{n}.
\end{align}

Our main findings are sufficient and necessary conditions derived for reliable top-$K$ identification for a general comparison graph. Especially, for a random comparison graph according to the Erd{\H o}s-R{\'e}nyi model, we can attain an order-wise tight sufficient condition for feasible top-$K$ ranking. We first state our results for general comparison graphs.
\begin{theorem} \label{thm:generalgraph}
Given a comparison graph $\mathcal{G} = ([n],\mathcal{E})$ and $L \geq \left\lceil c_{1}\frac{\log n}{d_{\rm max}}\left(\frac{d_{\rm max}}{\gamma d_{\rm min}}\right)^2 \right\rceil$, if
\begin{align} \label{ieq:l_general}
L|\mathcal{E}| \geq \left(c_2 + c_3\frac{\sqrt{n}d_{\max}}{\gamma d_{\min}}\|\mathcal{L}^2\|_{2,\infty} \right)^2\frac{|\mathcal{E}|\log n}{d_{\max} \Delta_K^2},
\end{align}
then \emph{Rank Centrality} correctly identifies the top-$K$ ranked items with probability at least $1 - 2n^{-2}$, where $c_{1}$, $c_{2}$ and $c_{3}$ are some numerical constants, $\mathcal{L}$ is the Laplacian matrix of graph $\mathcal{G}$ whose entries are defined as $\mathcal{L}_{ij} := \frac{1}{d_{i}}\mathbb{I}[(i,j) \in \mathcal{E}]$, and $\|A\|_{2,\infty} := \max\limits_{j}\left(\sqrt{\sum\limits_{i} |A_{ij}|^2}\right)$. Here $\gamma$ is the spectral gap of matrix $\mathcal{L}$ defined as the difference between the two largest absolute eigenvalues of $\mathcal{L}$, $d_{\rm max}$ is the maximum out-degree of vertices in $\mathcal{E}$ and $d_{\rm min}$ is the minimum.
\end{theorem}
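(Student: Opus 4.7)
The strategy is to bound the entrywise error $\|\widehat{\pi}-\pi^*\|_\infty$ between the stationary distribution $\widehat\pi$ output by \emph{Rank Centrality} and the population stationary distribution $\pi^*_i = w_i/\sum_j w_j$. Once this quantity is strictly smaller than roughly $(w_K-w_{K+1})/(2\sum_j w_j)$, the ordering of $\widehat\pi$ correctly separates $[K]$ from its complement. Here the population transition matrix $P$ has entries $P_{ij}=\tfrac{1}{d_{\max}}\cdot\tfrac{w_j}{w_i+w_j}$ for $(i,j)\in\mathcal E$, with $P_{ii}$ chosen so that each row sums to one, and $\widehat P$ is its empirical counterpart built from $\boldsymbol y$.

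The backbone of the argument is the perturbation identity $(\widehat\pi-\pi^*)(I-P)=\widehat\pi(\widehat P-P)$. Splitting $\widehat\pi = \pi^*+(\widehat\pi-\pi^*)$ on the right and inverting $I-P$ on the mean-zero subspace via its group inverse $(I-P)^{\#}$ gives
\begin{align*}
\widehat\pi-\pi^* = \pi^*(\widehat P-P)(I-P)^{\#} + (\widehat\pi-\pi^*)(\widehat P-P)(I-P)^{\#}.
\end{align*}
I would first apply matrix Bernstein to the reversible symmetrization $D_{\pi^*}^{1/2}(\widehat P-P)D_{\pi^*}^{-1/2}$ to show that $\|\widehat P-P\|$ is at most a constant fraction of the spectral gap $\gamma$ with high probability; this is precisely what the first hypothesis $L\gtrsim (\log n/d_{\max})(d_{\max}/(\gamma d_{\min}))^2$ delivers, and it allows me to absorb the second (higher-order) term of the identity into the left-hand side.

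What remains is an entrywise bound on $\pi^*(\widehat P-P)(I-P)^{\#}$. The columns of the group inverse are, up to a multiplicative factor of order $d_{\max}/\gamma$, images under powers of the random-walk matrix $\mathcal L$; applying one more round of the fixed-point recursion to expose a second application of the Markov operator is what brings $\mathcal L^2$ into the picture. Writing the $i$th coordinate as an inner product of the independent, zero-mean variables $\{(\widehat P-P)_{jk}\}$ against coefficients drawn from the $i$th column of $\pi^*(I-P)^{\#}$, Bernstein's inequality produces a sub-Gaussian tail whose variance is bounded by a multiple of $\|\mathcal L^2\|_{2,\infty}^2$. A union bound over $i\in[n]$ then yields
\begin{align*}
\|\widehat\pi-\pi^*\|_\infty \;\lesssim\; \Big(c_2+c_3\,\frac{\sqrt n\, d_{\max}}{\gamma d_{\min}}\,\|\mathcal L^2\|_{2,\infty}\Big)\sqrt{\frac{\log n}{L d_{\max}}},
\end{align*}
and demanding that the right-hand side be at most $\Delta_K/2$ (up to the fixed-range constants $w_{\min},w_{\max}$) reproduces the sample-size condition in \eqref{ieq:l_general}.

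The hardest step is the entrywise analysis of the main term: any bound that simply uses $\|\cdot\|_2\le\sqrt n\,\|\cdot\|_\infty$ together with the spectral gap is loose by a factor of $\sqrt n$ relative to the theorem. Obtaining the sharp $\|\mathcal L^2\|_{2,\infty}$ dependence requires simultaneously (a) exploiting that columns of $(I-P)^{\#}$ can be written, after one extra step of the fixed-point expansion, in terms of $\mathcal L^2$, and (b) using that $\widehat P-P$ is a sum of edge-wise independent contributions whose variance aligns exactly with the $(2,\infty)$ column norm of that Laplacian power. Balancing these against the higher-order absorption step is what determines the two constants $c_2$ and $c_3$ and constitutes the principal technical obstacle.
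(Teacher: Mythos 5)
Your architecture is genuinely different from the paper's: you set up a static perturbation identity $(\widehat\pi-\pi^*)(I-P)=\widehat\pi(\widehat P-P)$ and invert via the group inverse, whereas the paper works dynamically with the power iteration itself, deriving the pointwise recursion $|p_i^{(t)}-w_i|\le |p_i^{(t-1)}-w_i|\widehat P_{ii}+\sum_{j\ne i}|p_j^{(t-1)}-w_j|\widehat P_{ij}+|\sum_{j\ne i}(w_i+w_j)(\widehat P_{ji}-P_{ji})|$, controlling the three terms by Hoeffding (Lemma 1), a uniform bound $\widehat P_{ii}\le\beta<1$ (Lemma 2), and a second unrolling of the recursion (Lemma 3), then solving the resulting linear recursion in $t$. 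Both routes are in principle viable, and your separation argument at the end (requiring the $\ell_\infty$ error to be below $\Delta_K/2$) matches the paper's exactly.

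However, there is a genuine gap at the step you yourself flag as the ``principal technical obstacle,'' and the mechanism you propose for it does not match how the $\frac{\sqrt n\,d_{\max}}{\gamma d_{\min}}\|\mathcal L^2\|_{2,\infty}$ term actually arises. You claim Bernstein's inequality applied to the main term $\pi^*(\widehat P-P)(I-P)^{\#}$ yields a variance proportional to $\|\mathcal L^2\|_{2,\infty}^2$, with the full prefactor then following from a union bound. But a pure concentration computation on edge-wise independent noise has no natural source for the $\sqrt n$ factor. In the paper, that factor does not come from concentration at all: after unrolling the recursion once more to expose the products $\widehat P_{jk}\widehat P_{ij}\le[\mathcal L^2]_{ki}$, the cross term is bounded \emph{deterministically} by Cauchy--Schwarz as $\sum_k|p_k^{(t-1)}-w_k|\,[\mathcal L^2]_{ki}\le\|\boldsymbol p^{(t-1)}-\boldsymbol w\|_2\,\|\mathcal L^2\|_{2,\infty}$, and the $\ell_2$ factor is then supplied by importing the known $\ell_2$ error guarantee of \emph{Rank Centrality} (Lemma 2 of Negahban et al.), which is where $\sqrt n\cdot\frac{d_{\max}}{\gamma d_{\min}}\sqrt{\log n/(Ld_{\max})}$ enters. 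Your plan never invokes that $\ell_2$ bound, and your heuristic that ``columns of the group inverse are images under powers of $\mathcal L$ up to a factor $d_{\max}/\gamma$'' is not enough to recover it: the group inverse is a sum over \emph{all} powers of $P$, so isolating exactly an $\mathcal L^2$ contribution plus a controlled remainder is precisely the work that remains undone. To close the gap you would need either to import the $\ell_2$ guarantee and apply Cauchy--Schwarz as the paper does, or to carry out a genuinely new entrywise analysis of $(I-P)^{\#}$ that you have only sketched.
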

Note that in terms of the sample complexity defined as (\ref{def:complexity}), 
this theorem establishes a sufficient condition of the sample complexity for reliable top-$K$ ranking. Precisely, 
\begin{align} \label{sample achieve}
S_{\Delta_K} \lesssim \frac{|\mathcal{E}|}{d_{\max}}\left(1 + \frac{\sqrt{n}d_{\max}}{\gamma d_{\min}}\|\mathcal{L}^2\|_{2,\infty}\right)^2\frac{\log n}{\Delta_K^2}.
\end{align} 
We provide the proof of this theorem in Section~\ref{sec:prooftheorem}.

What follows next is a necessary condition for reliable top-$K$ ranking.
\begin{theorem} \label{thm:generalconverse}
Fix $\epsilon \in (0, \frac{1}{2})$. Given a comparison graph $\mathcal{G} = ([n], \mathcal{E})$, if
\begin{align}
L|\mathcal{E}| \leq c_4(1-\epsilon)\frac{n \log n}{\Delta_K^2},
\end{align}
for some numerical constant $c_4$, then for any ranking scheme $\psi$, there exists a preference score vector $w$ with seperation $\Delta_K$ such that $P_e (\psi) \geq \epsilon$.
\end{theorem}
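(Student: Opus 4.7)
The plan is a classical Fano-type hypothesis-testing argument: exhibit a family of preference vectors that all satisfy the separation constraint $\Delta_K$ yet induce distinct top-$K$ sets, bound the Kullback--Leibler divergence between the resulting observation laws, and invoke Fano's inequality to conclude that some vector in the family forces $P_e \geq \epsilon$ when $L|\mathcal E|$ is too small.

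For concreteness suppose $K \leq n/2$; the other case follows by the symmetric construction that pivots a rank-$(K{+}1)$ element against each top-$K$ item. For each $m \in \{K, K{+}1, \ldots, n\}$, let $\boldsymbol w^{(m)}$ have $w_i^{(m)} = 1$ for $i \in [K-1] \cup \{m\}$ and $w_i^{(m)} = 1 - \Delta_K$ otherwise. Then every $\boldsymbol w^{(m)}$ has separation exactly $\Delta_K$, and its induced top-$K$ set $\mathcal S^{(m)} = [K-1] \cup \{m\}$ uniquely determines $m$, so the $N := n - K + 1$ hypotheses produce $N$ distinct top-$K$ sets and recovering the top-$K$ set reduces to recovering $m$.

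Since $\boldsymbol w^{(m)}$ and $\boldsymbol w^{(m')}$ differ only in coordinates $m$ and $m'$, the observation laws agree on every edge $(i,j) \in \mathcal E$ with $\{i,j\} \cap \{m, m'\} = \emptyset$; on the remaining edges the Bernoulli parameters shift by at most $O(\Delta_K)$, and since the scores stay in $[1-\Delta_K, 1]$ each per-comparison KL is $O(\Delta_K^2)$, so $\mathrm{KL}(P_{\boldsymbol w^{(m)}} \| P_{\boldsymbol w^{(m')}}) \leq c\,L\,(d_m + d_{m'})\,\Delta_K^2$. Placing a uniform prior on $\{K, \ldots, n\}$ and applying the convexity bound
\[
  I(M;Y) \;\leq\; \tfrac{1}{N^2}\sum_{m,m'}\mathrm{KL}\!\bigl(P_{\boldsymbol w^{(m)}}\,\|\,P_{\boldsymbol w^{(m')}}\bigr) \;\leq\; \tfrac{4c\,L|\mathcal E|\Delta_K^2}{N},
\]
where the last step uses $\sum_{m=K}^n d_m \leq 2|\mathcal E|$. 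Fano's inequality $P_e \geq 1 - (I(M;Y)+\log 2)/\log N$ combined with $N \geq n/2$ then delivers $P_e \geq \epsilon$ whenever $L|\mathcal E| \leq c_4(1-\epsilon)\,n\log n/\Delta_K^2$ for a sufficiently small $c_4$, so at least one $\boldsymbol w^{(m)}$ in the family must force the stated error probability.

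The main obstacle is the averaging step: a worst-case per-pair KL bound scales with $d_{\max}$, which is too loose, and only after averaging $d_m + d_{m'}$ across the family does the total edge budget $|\mathcal E|$ enter divided by $n$ to produce the correct dependence on $n$. A secondary technicality is verifying the quadratic Bernoulli KL expansion uniformly across edges, which relies on the bounded score range keeping all comparison probabilities bounded away from $0$ and $1$.
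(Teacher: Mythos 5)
Your proposal is correct and follows essentially the same route as the paper's proof: the same family of hypotheses obtained by pivoting a single item in or out of the top-$K$ set, the same observation that the KL divergence localizes to the $d_m + d_{m'}$ edges touching the swapped indices with each per-comparison KL of order $\Delta_K^2$ (the paper cites Lemma 3 of Chen and Suh for the constant $w_{\max}^4/w_{\min}^4$), the same averaging over the hypothesis family to convert the degree sum into $|\mathcal{E}|$, and the same generalized Fano inequality to conclude. The only cosmetic difference is that you treat $K \le n/2$ and defer the symmetric case, whereas the paper writes out both constructions explicitly.
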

This result implies that we need at least $L|\mathcal{E}| \geq c_4 \frac{n \log n}{\Delta_K^2}$ for reliable top-$K$ ranking. Then, when we express the result of Theorem \ref{thm:generalconverse} in terms of the sample complexity, that is
\begin{align} \label{sample converse}
S_{\Delta_K} \gtrsim \frac{n \log n}{\Delta_K^2}.
\end{align}
The proof is a generalized version of Theorem 2 in \citep{Chen15}. We provide the proof of this theorem in Section \ref{sec:generalconverse}.

For well-balanced cases where $d_{\rm min} = \Theta(d_{\rm max})$, one can verify that $\| \mathcal{L}^2 \|$ is on the order of $\sqrt{\frac{1}{n} + \frac{1}{d_{\rm min}^2}}$ and $\frac{|\mathcal{E}|}{d_{\rm max}}$ is on the order of $n$. Taking these two together, the gap between the necessary condition and the sufficient condition can be shown as a factor of $1 + \frac{n}{d_{\rm min}^2}$. We note that for well-balanced graphs, when $d_{\rm min}$ is at least $O(\sqrt{n})$, the gap disappears. That is, \emph{Rank Centrality} is optimal. We make this point precise in the following theorem where we analyze comparisons over Erd{\H o}s-R{\'e}nyi graphs.

\begin{theorem} \label{thm:ergraph}
Suppose $\mathcal{G} \sim \mathcal{G}_{n,p}$. There exist positive numerical constants $c_4 > 1$, $c_5$ and $c_6$ such that if $p \geq c_4 \sqrt{\frac{\log n}{n}}$, $L \geq \left \lceil c_{5}\frac{\log n}{np} \right \rceil$, and 
\begin{align} \label{er sufficient}
\frac{n^2pL}{2} \geq c_6 \frac{n \log n}{\Delta_K^2},
\end{align} 
then \emph{Rank Centrality} correctly identifies the top-$K$ ranked items with probability at least $1 - 4n^{-\alpha}$, where $\alpha := \min (1, \frac{3}{28}c_4^2)$.
\end{theorem}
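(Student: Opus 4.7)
The plan is to derive Theorem~\ref{thm:ergraph} as a specialization of Theorem~\ref{thm:generalgraph} once we establish concentration of the relevant graph-theoretic quantities for $\mathcal{G} \sim \mathcal{G}_{n,p}$. Applying Theorem~\ref{thm:generalgraph} requires control of four things on the random graph: $d_{\min}$, $d_{\max}$, the spectral gap $\gamma$ of $\mathcal{L}$, and the mixed norm $\|\mathcal{L}^2\|_{2,\infty}$. Once each of these is pinned down, substitution into (\ref{ieq:l_general}) and use of $|\mathcal{E}| \approx n^2 p/2$ should reproduce the condition $n^2 p L/2 \gtrsim n\log n / \Delta_K^2$ in (\ref{er sufficient}).

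First, I would apply a Chernoff bound to the degrees $d_i \sim \mathrm{Bin}(n-1,p)$: under $p \geq c_4\sqrt{\log n/n}$, a union bound over $i \in [n]$ yields $d_{\min}, d_{\max} \in [(1-\eta)np, (1+\eta)np]$ for some $\eta = o(1)$ with probability at least $1 - n^{-\alpha}$, which simultaneously gives $d_{\max}/d_{\min} = 1 + o(1)$ and ensures the preliminary lower bound $L \geq c_5 \log n/(np) \gtrsim (\log n / d_{\max})(d_{\max}/d_{\min})^2 / \gamma^2$ required by Theorem~\ref{thm:generalgraph}. Second, I would bound the spectral gap $\gamma$ from below. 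Writing $\mathcal{L} = \frac{1}{n}\mathbf{1}\mathbf{1}^T + M$ with $M\mathbf{1}=0$, a standard matrix-Bernstein (or trace-moment) argument applied to $M$ on $\mathcal{G}_{n,p}$ shows $\|M\|_{op} \lesssim 1/\sqrt{np}$; since the top eigenvalue of $\mathcal{L}$ is $1$, this gives $\gamma \geq 1 - O((np)^{-1/2}) = 1 - o(1)$ under the regime $p \gtrsim \sqrt{\log n/n}$, so in particular $d_{\max}/(\gamma d_{\min}) = O(1)$.

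The main obstacle is bounding $\|\mathcal{L}^2\|_{2,\infty}$ sharply enough that the term $\sqrt{n}\,d_{\max}\|\mathcal{L}^2\|_{2,\infty}/(\gamma d_{\min})$ in (\ref{ieq:l_general}) reduces to a constant. My plan is to exploit the decomposition $\mathcal{L}^2 = \frac{1}{n}\mathbf{1}\mathbf{1}^T + M^2$, which yields
\begin{align}
\|\mathcal{L}^2\|_{2,\infty} \;\leq\; \frac{1}{\sqrt{n}} + \|M^2\|_{2,\infty}.
\end{align}
For the second term I would use $\|M^2\|_{2,\infty} \leq \|M\|_{op}\,\|M\|_{2,\infty}$, where $\|M\|_{2,\infty}$ is controlled column-wise via the concentration of $d_i$ (giving $\|M\|_{2,\infty} = O(1/\sqrt{d_{\min}}) = O(1/\sqrt{np})$), and $\|M\|_{op} = O(1/\sqrt{np})$ from the previous step. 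Thus $\|\mathcal{L}^2\|_{2,\infty} = O(1/\sqrt{n} + 1/(np))$, and under $np \geq c_4\sqrt{n\log n} \geq \sqrt{n}$ the first term dominates, giving $\sqrt{n}\|\mathcal{L}^2\|_{2,\infty} = O(1)$. The delicate point here is that a crude triangle inequality on $M^2$ entry-wise is too lossy, so the operator-norm factorization together with the sub-Gaussian concentration of $M$ is essential.

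Finally, substituting $|\mathcal{E}| = (1+o(1))n^2 p / 2$, $d_{\max} = \Theta(np)$, $\gamma = \Theta(1)$, and $\|\mathcal{L}^2\|_{2,\infty} = O(1/\sqrt{n})$ into (\ref{ieq:l_general}) collapses the bracket to a constant and yields
\begin{align}
L|\mathcal{E}| \;\gtrsim\; \frac{|\mathcal{E}|\log n}{d_{\max}\,\Delta_K^2} \;=\; \Theta\!\left(\frac{n\log n}{\Delta_K^2}\right),
\end{align}
which is exactly (\ref{er sufficient}) up to the constant $c_6$. A union bound over the graph-concentration event (probability at least $1 - c\,n^{-\alpha}$ with $\alpha = \min(1,\tfrac{3}{28}c_4^2)$, reflecting the Chernoff exponent obtained from $p \geq c_4\sqrt{\log n/n}$) and the Rank-Centrality success event (probability $1 - 2n^{-2}$ from Theorem~\ref{thm:generalgraph}) then delivers the claimed $1 - 4n^{-\alpha}$ success probability.
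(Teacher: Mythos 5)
Your overall skeleton matches the paper's: both proofs specialize the general $\ell_\infty$ bound of Theorem~\ref{thm:generalgraph} to $\mathcal{G}_{n,p}$ by showing $d_{\min},d_{\max}=\Theta(np)$, $\gamma=\Theta(1)$ (the paper simply cites Lemma 7 of Negahban et al.\ for $\gamma\ge 1/2$), and $\sqrt{n}\,\|\mathcal{L}^2\|_{2,\infty}=O(1)$, then substitute $|\mathcal{E}|\asymp n^2p/2$. Where you genuinely diverge is the crux step of bounding $\|\mathcal{L}^2\|_{2,\infty}$. The paper argues combinatorially: it bounds $[\mathcal{L}^2]_{ki}\le d_{\min}^{-2}\sum_j \mathbb{I}[(k,j)\in\mathcal{E}]\mathbb{I}[(j,i)\in\mathcal{E}]$ and applies Bernstein to the codegree (number of common neighbors) of each pair, which concentrates around $np^2$ precisely because $np^2\gtrsim \log n$ --- this is where the dense-regime hypothesis $p\ge c_4\sqrt{\log n/n}$ enters, and it also dictates the exponent $\tfrac{3}{28}c_4^2$ in the failure probability. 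You instead use the spectral factorization $\|M^2\|_{2,\infty}\le\|M\|_{\rm op}\|M\|_{2,\infty}$ with $\|M\|_{\rm op}=O(1/\sqrt{np})$, which avoids codegree counting entirely and needs the dense condition only at the last step ($1/(np)\le 1/\sqrt{n}$); this is arguably cleaner and makes transparent why $\sqrt{n}\,\|\mathcal{L}^2\|_{2,\infty}\asymp \sqrt{n}\bigl(1/\sqrt{n}+1/d_{\min}\bigr)$, matching the paper's informal remark after Theorem~\ref{thm:generalconverse}, at the cost of importing a nontrivial operator-norm concentration result. One algebraic slip to repair: since $\mathcal{L}$ is row- but not column-stochastic, $\mathbf{1}^{T}M\neq 0$, so $\mathcal{L}^2=\tfrac1n\mathbf{1}\mathbf{1}^{T}+\tfrac1n\mathbf{1}(\mathbf{1}^{T}M)+M^2$ rather than $\tfrac1n\mathbf{1}\mathbf{1}^{T}+M^2$; the cross term has columns of $\ell_2$-norm $|(\mathbf{1}^{T}M)_j|/\sqrt{n}=O(1/\sqrt{n})$ under degree concentration, so it only affects constants, but it must be accounted for. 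With that fix, your route is a valid alternative to the paper's proof.
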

This result offers a much tighter bound than Theorem \ref{thm:generalgraph}. In terms of the sample complexity, a sufficient condition of the sample complexity on a random comparison graph is
\begin{align} \label{sufficient ergraph} 
S_{\Delta_K} \lesssim \frac{|\mathcal{E}|\log n}{np\Delta_K^2} \asymp \frac{n \log n}{\Delta_K^2},
\end{align} 
since the sufficient condition for reliable ranking on a random comparison graph given by (\ref{er sufficient}) is $L|\mathcal{E}| \gtrsim \frac{|\mathcal{E}|\log n}{np\Delta_K^2}$, and the number of item pairs being compared $|\mathcal{E}|$ concentrates to $\frac{n^2p}{2}$ for Erd{\H o}s-R{\'e}nyi random comparison graphs.
Note that this sufficient condition for reliable top-$K$ ranking matches the necessary condition in (\ref{sample converse}). That is, for random comparison graphs that follow the Erd{\H o}s-R{\'e}nyi model, we can establish the minimax optimality of \emph{Rank Centrality}. Precisely,
\begin{align} \label{minsamplecomplexity}
S_{\Delta_K} \asymp \frac{ n \log n }{ \Delta_K^2 }.
\end{align}
We provide the proof of this theorem in Section~\ref{sec:ergraph_proof}.

\begin{figure}[ht]
\begin{center}
\centerline{\includegraphics[width=0.45\columnwidth]{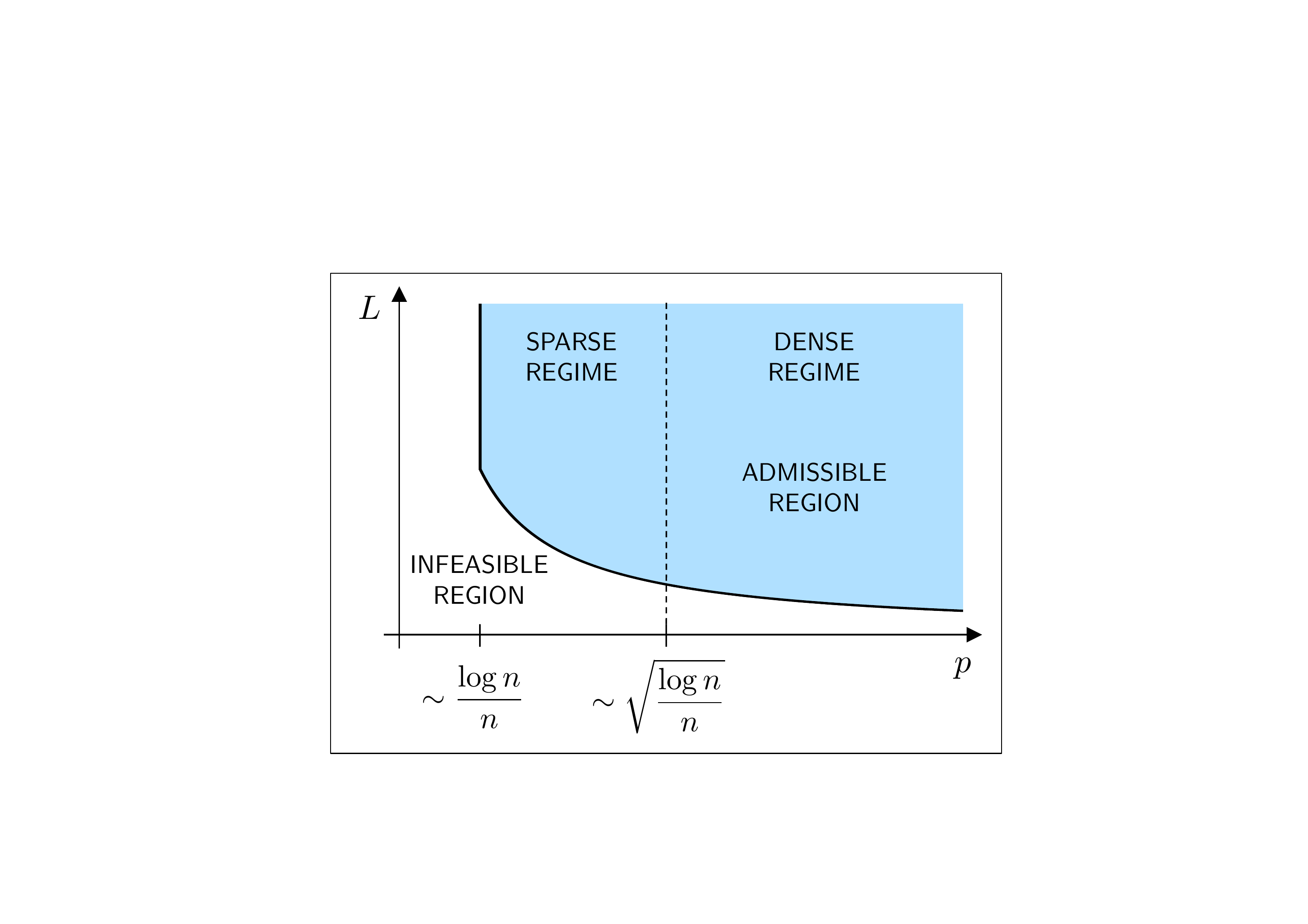}}
\vskip -0.1in
\caption{{\it Spectral MLE}, which merges {\it Rank Centrality} and an additional refinement stage that performs coordinate-wise MLEs, achieves reliable top-$K$ ranking in the entire admissible region depicted above. Our analysis reveals that in the dense regime {\it Rank Centrality} alone suffices to achieve it.}
\label{fig:admissible}
\end{center}
\vskip -0.2in
\end{figure}

Our main contribution is the establishment of a sufficient condition for top-$K$ identification, which matches the necessary condition derived in \citep{Chen15} for random comparison graphs constructed by the Erd{\H o}s-R{\' e}nyi model. It is important to point out two notable distinctions in achievability compared to \citep{Chen15}. First, a spectral method such as {\it Rank Centrality} \citep{Negahban12} suffices to achieve the order-wise tight sample complexity, without relying on an additional process of local refinement employed in \citep{Chen15}. Second, our main results concern a slightly denser regime, indicated by the condition $p \gtrsim \sqrt{\frac{\log n}{n}}$, in which many distinct item pairs are likely to be compared. As shown in \citep{Chen15}, the dense regime condition $p \gtrsim \sqrt{\frac{\log n}{n}}$ is not necessary for top-$K$ identification. However, it is not clear yet whether or not the condition is required under our approach that employs only a spectral method. Our speculation is that the sparse regime condition, indicated by $\frac{\log n}{n} \lesssim p \lesssim \sqrt{\frac{\log n}{n}}$, may not be sufficient for spectral methods to achieve reliable top-$K$ identification (to be discussed in Section~\ref{sec:simulations}).

To validate our main result based on the Erd{\H o}s-R{\'e}nyi model, we conducted numerical experiments (to be illustrated in Section~\ref{sec:simulations}). In the dense regime indicated by $p \gtrsim \sqrt{\frac{\log n}{n}}$, the experimental results clearly illustrate that {\it Rank Centrality} alone (a spectral method) achieves reliable top-$K$ identification as {\it Spectral MLE} does. In the sparse regime indicated by $\frac{\log n}{n} \lesssim p \lesssim \sqrt{\frac{\log n}{n}}$, however, {\it Rank Centrality} fails to achieve it, which leads us to the aforementioned speculation.

\begin{remark}
As mentioned earlier, our ranking algorithm is based solely on a spectral method, \emph{Rank Centrality} in \citep{Negahban12}, which enjoys nearly-linear time computational complexity. Hence, not only can the information-theoretic limit promised by (\ref{minsamplecomplexity}) be achieved by a computationally efficient low-complexity algorithm, but also we can achieve it with much less computational overhead as compared to \emph{Spectral MLE} in \citep{Chen15}, which employs an additional refinement stage.
\end{remark}


\begin{remark}
By the hypothesis $p \gtrsim \sqrt{\frac{\log n}{n}}$ in Theorem~\ref{thm:ergraph}, $n^2pL \gtrsim n \sqrt{n \log n}L$. It means that, unless $\frac{n \log n}{\Delta_K^2} \gtrsim n \sqrt{n \log n} L$, the minimax optimality of the sample complexity we claim to characterize is on the order of $n \sqrt{n \log n} L$, not $\frac{n \log n}{\Delta_K^2}$. We note that we consider a regime where $\Delta_K$ is not on the constant order, so it is reasonable to assume $\Delta_K \lesssim \left( \frac{\log n}{n L^2} \right)^{\frac{1}{4}}$, which leads to $\frac{n \log n}{\Delta_K^2} \gtrsim n \sqrt{n \log n} L$. Note that since there are $n$ items each with $w_i\in[w_{\rm min},w_{\rm max}]$, a typical regime of $\Delta_K$ scales as $1/n$. Therefore, we conclude that the minimax optimality of the sample complexity is on the order of $\frac{n \log n}{\Delta_K^2}$.
\end{remark}

\section{Experimental Results}
\label{sec:simulations}

We conduct a series of synthetic experiments to corroborate our main result in Theorem~\ref{thm:ergraph}. We consider both dense ($p \gtrsim \sqrt{\frac{\log n}{n}}$) and sparse ($\frac{\log n}{n} \lesssim p \lesssim \sqrt{\frac{\log n}{n}}$) regimes. To be more precise, we set constant $c_1 = 2$, and set $p_{\rm dense} = 0.25$ and $p_{\rm sparse} = 0.025$, to make each be in its proper range. To specify the implementation parameters, we use $n = 500$, $K = 10$, and $\Delta_K = 0.1$. Each result in all numerical simulations is obtained by averaging over 10000 Monte Carlo trials.

\begin{figure}[ht!]
\begin{center}
\includegraphics[width=0.45\columnwidth]{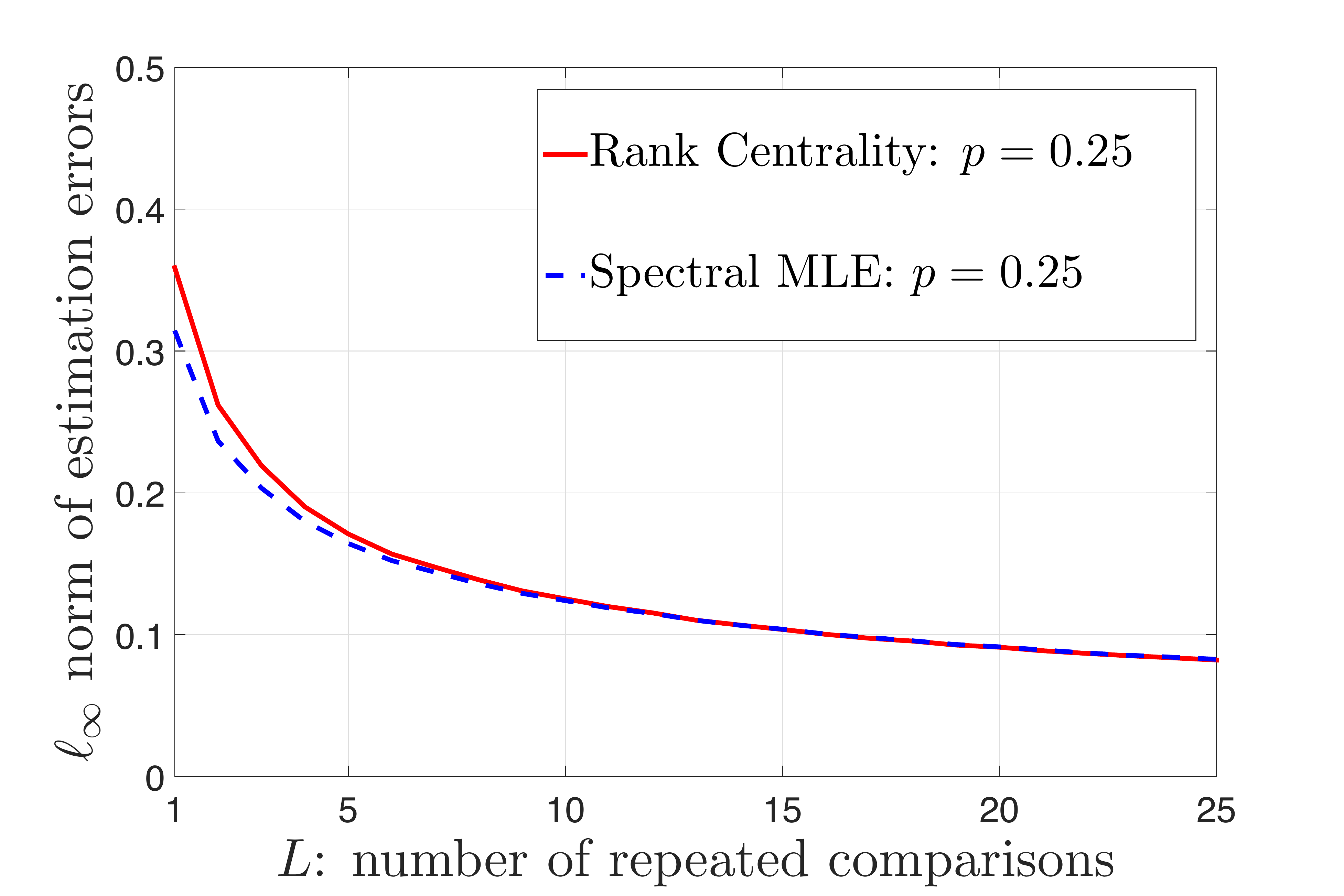}
\includegraphics[width=0.45\columnwidth]{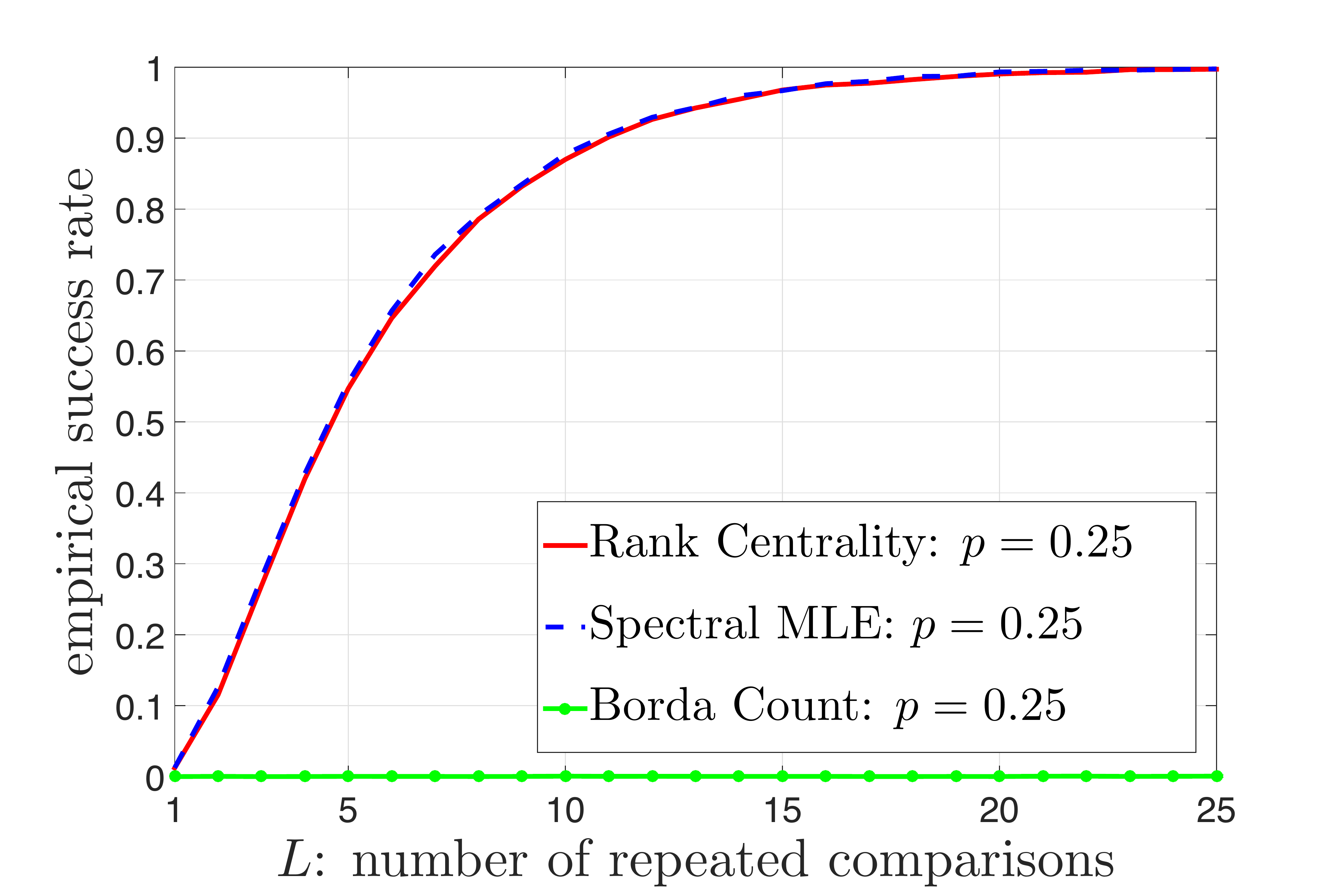}
\vskip -0.1in
\caption{Dense regime ($p_{\rm dense} = 0.25$): empirical $\ell_\infty$ estimation error v.s. $L$ (left); empirical success rate v.s. $L$ (right).}
\label{fig:dense}
\end{center}
\vskip -0.35in
\end{figure}

\begin{figure}[ht!]
\begin{center}
\includegraphics[width=0.45\columnwidth]{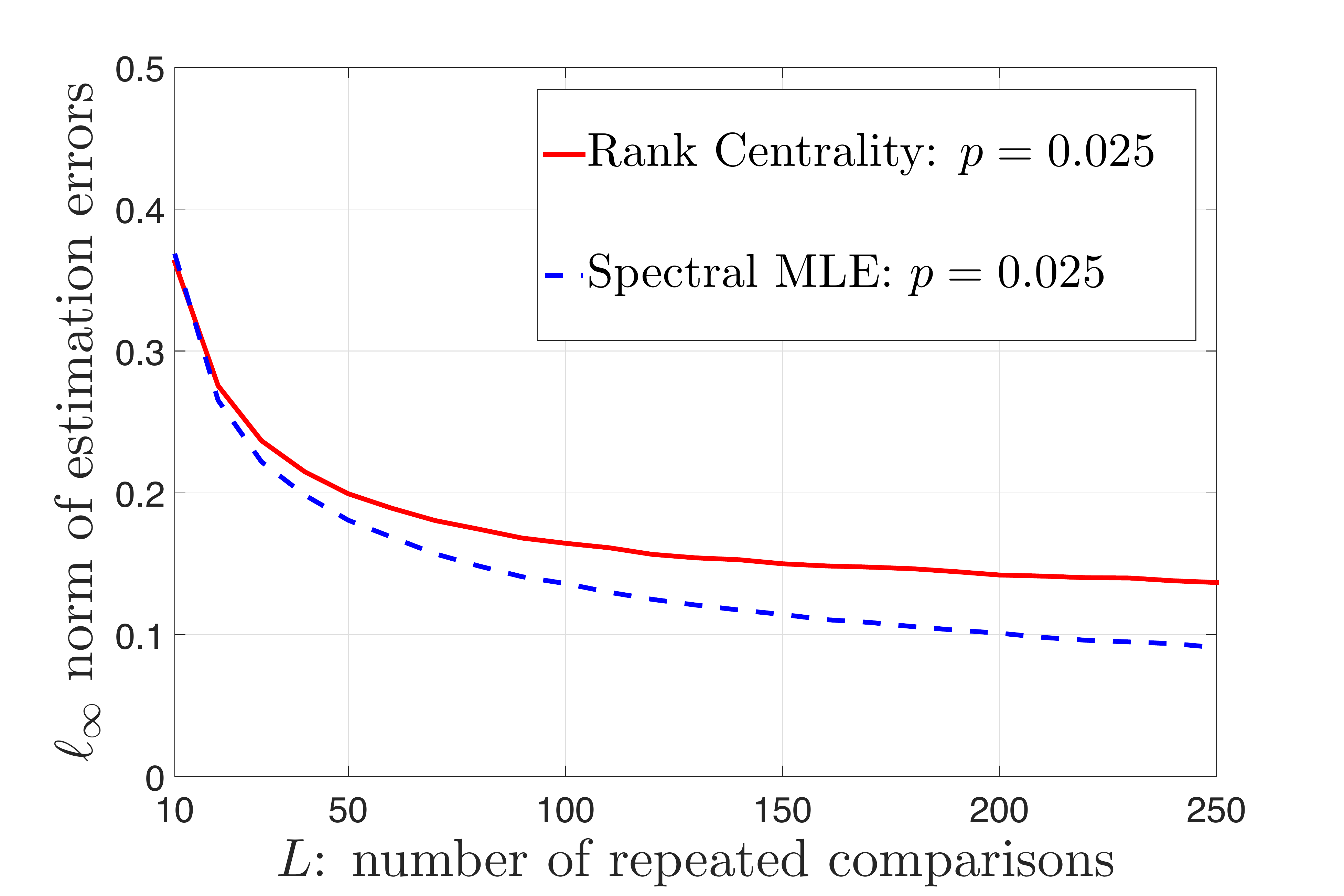}
\includegraphics[width=0.45\columnwidth]{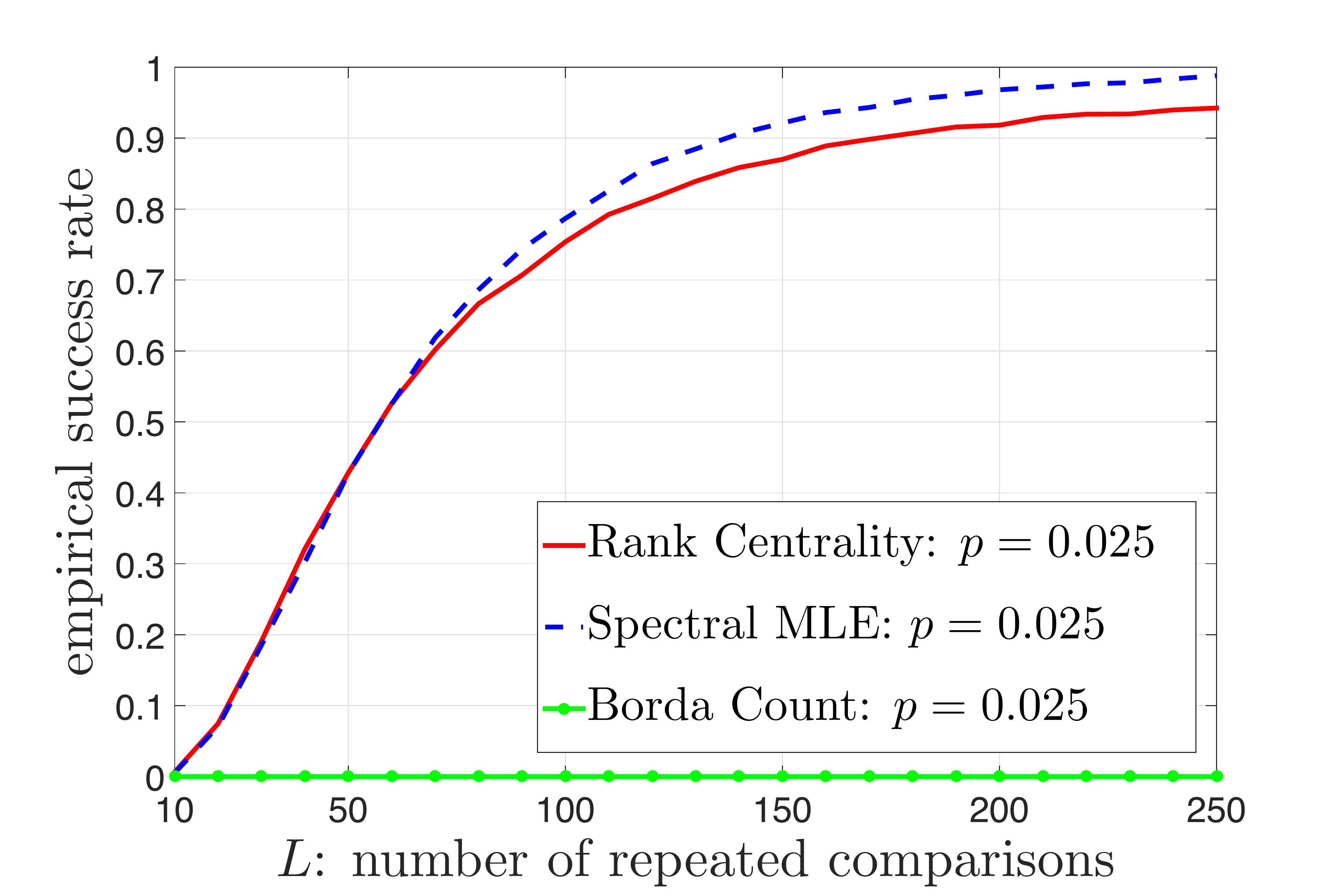}
\vskip -0.1in
\caption{Sparse regime ($p_{\rm sparse} = 0.025$): empirical $\ell_\infty$ estimation error v.s. $L$ (left); empirical success rate v.s. $L$ (right).}
\label{fig:sparse}
\end{center}
\vskip -0.35in
\end{figure}

Figure~\ref{fig:dense} illustrates the numerical experiments conducted in the dense regime. We see that as $L$ increases, meaning as we get to obtain pairwise evaluation samples beyond the minimal sample complexity, (1) the $\ell_\infty$ estimation error of \emph{Rank Centrality} decreases and soon meets that of \emph{Spectral MLE} (left); (2) the success rate of \emph{Rank Centrality} increases and soon hits $100\%$ along with \emph{Spectral MLE} (right). The curves clearly support our results; in the dense regime specified by $p \gtrsim \sqrt{\frac{\log n}{n}}$, \emph{Rank Centrality} (a spectral method) alone can achieve reliable top-$K$ ranking.

Figure~\ref{fig:sparse} illustrates the numerical experiments conducted in the sparse regime. We see that, in contrast with the experiments in the dense regime, as $L$ increases, (1) the $\ell_\infty$ estimation error of \emph{Rank Centrality} decreases but does not meet that of \emph{Spectral MLE} (left); (2) the success rate of \emph{Rank Centrality} increases but does not reach that of \emph{Spectral MLE} which hits nearly $100\%$ (right). The curves lead us to speculate that the sparse regime condition specified by $\frac{\log n}{n} \lesssim p \lesssim \sqrt{\frac{\log n}{n}}$ may not be sufficient for spectral methods to achieve reliable top-$K$ identification.

\section{Conclusion and Future Work}
\label{sec:futurework}
We investigated top-$K$ rank aggregation from pairwise data. We demonstrated that a spectral method alone, which features nearly-linear time computational complexity, is sufficient in achieving the minimal sample complexity in the dense regime. Some limitations of our results suggest future directions. Exploring if a spectral method can also achieve reliable top-$K$ identification in (part of) the sparse regime would be the most interesting one. \citet{MG15} proposed an \emph{Iterative Luce Spectral Ranking (I-LSR)} algorithm that has a spirit of spectral ranking, and showed that surprisingly the performance of \emph{I-LSR} is the same as MLE for underlying preference scores. Motivated by their results, we can set out to investigate \emph{I-LSR} to see if it can achieve the minimax optimality in the sparge regime. Analyzing spectral methods under comparison graphs not limited to the Erd{\H o}s-R{\'e}nyi graphs as well as other choice models such as the Plackett-Luce model \citep{Plackett75, Hajek14,MG15} could be another.

\section{Proof of Theorem~\ref{thm:generalgraph}}
\label{sec:prooftheorem}

\subsection{Algorithm Description}
\label{sec:algodesc}
\begin{algorithm}
   \caption{Rank Centrality \citep{Negahban12}}
   \label{alg:rankcentrality}
\begin{algorithmic}
   \STATE {\bfseries Input:} The collection of sufficient statistics $\boldsymbol{y} = \left\{ y_{ij} : (i,j) \in \mathcal{E}, y_{ij} = \frac{1}{L}\sum_{\ell=1}^L y_{ij}^{(\ell)} \right\}$.
   
	\STATE {\bfseries Compute} the transition matrix $\hat{\boldsymbol{P}} = [\hat{P}_{ij}]_{1 \leq i,j \leq n}$:
\begin{align*}
\hat{P}_{ij} = \left\{
\begin{array}{ll}
    \frac{1}{d_{\rm max}}y_{ij} & \text{if } (i,j) \in \mathcal{E}; \\
	1 - \frac{1}{d_{\rm max}} \sum_{k : (k,j) \in \mathcal{E}} y_{kj} & \text{if }i=j; \\
0 & \text{otherwise}.
\end{array}
\right.
\end{align*}

	\STATE {\bfseries Output} the stationary distribution of matrix $\hat{\boldsymbol{P}}$.
\end{algorithmic}
\end{algorithm}

In an ideal scenario where we obtain an infinite number of samples per pairwise comparison, i.e., $L \to \infty$, sufficient statistics $y_{ij}$ converge to $\frac{w_i}{w_i + w_j}$. Then, constructed matrix $\hat{\boldsymbol{P}}$ defined in Algorithm~\ref{alg:rankcentrality} becomes a matrix $\boldsymbol{P}$ whose entries $[P_{ij}]_{1 \leq i, j \leq n}$ are defined as
\begin{align} \label{idealP}
P_{ij} = \left\{
\begin{array}{ll}
    \frac{1}{d_{\rm max}}\frac{w_i}{w_i + w_j} & \text{if } (i,j) \in \mathcal{E}; \\
	1 - \frac{1}{d_{\rm max}} \sum_{k : (k,j) \in \mathcal{E}} \frac{w_k}{w_k + w_j} & \text{if }i=j; \\
0 & \text{otherwise}.
\end{array} \right.
\end{align}
The entries for observed pairs of items, $P_{ij} = \frac{1}{d_{\rm max}} \frac{w_i}{w_i + w_j}$, represent the relative likelihood of item $i$ being preferred to item $j$. Intuitively, random walks of $\boldsymbol{P}$ over the long run will visit some states (corresponding to items) more often, if they have been preferred to other frequently-visited states or preferred to many other states.

The random walks have some properties that lead us to a desirable outcome. We can see that the walks are reversible, as $w_i P_{ji} = w_j P_{ij}$ holds, thus have a stationary distribution equal to the preference score vector $\boldsymbol{w} = \{w_1, \dots, w_n \}$, up to some constant scaling. We can also see that under the assumption that guarantees connectivity, the walks are irreducible, thus the stationary distribution is unique. To find the stationary distribution of the walks of $\boldsymbol{P}$ is to retrieve the precise underlying preference scores.


It is clear that random walks of $\hat{\boldsymbol{P}}$, which can be viewed as a noisy version of $\boldsymbol{P}$, will give us an approximation to the ground-truth preference scores. The algorithm described above adopts a power method to compute the stationary distribution. Power methods are known to be computationally efficient in obtaining the leading eigenvalue of a sparse matrix \citep{Meirovitch97}. Initially starting with the uniform distribution over $[n]$, the algorithm iteratively computes the following until convergence:
\begin{align} \label{powermethod}
\boldsymbol{p}^{(t)} =  \hat{\boldsymbol{P}}\boldsymbol{p}^{(t-1)},
\end{align}
where $\boldsymbol{p}^{(t)}$ is a vector that represents the distribution of a random walk at iteration $t$. When convergence is reached, the algorithm returns the indices of the $K$ largest components of the distribution, which are the top-$K$ ranked items.

\subsection{Proof Outline} \label{subsec:linfty}
To distinguish the top-$K$ items from the rest, the pointwise error of each item becomes a fundamental bottleneck for top-$K$ ranking. It will be impossible to separate the $K^{th}$ and $(K + 1)^{th}$ ranked items unless their score separation exceeds the aggregate error of the score estimates for the two items. Based on this observation, we focus on figuring out the maximal pointwise error $\|\hat{\boldsymbol{w}}-\boldsymbol{w}\|_{\infty}$.

For the sake of clear demonstration, we use our stronger result in Theorem~\ref{thm:ergraph} attained by extending the results in Theorems~\ref{thm:generalgraph} and~\ref{thm:generalconverse} to the Erd{\H o}s-R{\' e}nyi model. The explanations carry over to the general model of our interest just as well, illustrating the motivation and the needed steps toward proving Theorem~\ref{thm:generalgraph}. Now, let us see how the following $\ell_\infty$ norm bound on the pointwise error (derived under the Erd{\H o}s-R{\' e}nyi random comparison graphs model) plays a key role in top-$K$ identification:
\begin{align} \label{ieq:linf}
\frac{\|\hat{\boldsymbol{w}}-\boldsymbol{w}\|_{\infty}}{\|\boldsymbol{w}\|_{\infty}}
\lesssim \sqrt{\frac{\log n}{npL}},
\end{align}
given $p > c_4 \sqrt{\frac{\log n}{n}}$ and $L \geq \left \lceil c_5 \frac{\log n}{np} \right \rceil$ where $c_4$ and $c_5$ are some constants.

We assume $\|\boldsymbol{w}\|_{\infty} = w_{\rm max} = 1$ for ease of presentation. Suppose $\Delta_K = w_K - w_{K+1} \gtrsim \sqrt{\frac{\log n}{npL}}$, then
\begin{align}
\hat{w}_i - \hat{w}_j & \geq w_i - w_j - | \hat{w}_i - w_i | - | \hat{w}_j - w_j | \geq w_K - w_{K+1} - 2 \| \hat{\boldsymbol{w}} - \boldsymbol{w} \|_{\infty} > 0,
\end{align}
for all $1 \leq i \leq K$ and $j \geq K+1$, indicating that the algorithm will output the top-$K$ items as desired. Hence, as long as $\Delta_K \gtrsim \sqrt{ \frac{ \log n}{ npL} }$ holds (coinciding with the claimed bound in (\ref{ieq:linf})), in other words, $pL \gtrsim \frac{\log n}{n \Delta_K^2}$ holds, reliable top-$K$ ranking is guaranteed with the sample size $\frac{n^2 pL}{2} \gtrsim \frac{n \log n}{\Delta_K^2}$.



What remains toward proving Theorem~\ref{thm:generalgraph} is the proof of the following, which is an $\ell_\infty$ norm bound on the maximal pointwise error in general comparison graphs (i.e., a generalized version of (\ref{ieq:linf})):
\begin{align} \label{ieq:linfgeneral}
\frac{\|\hat{\boldsymbol{w}}-\boldsymbol{w}\|_{\infty}}{\|\boldsymbol{w}\|_{\infty}}
\lesssim \left(1 + \frac{\sqrt{n}d_{\max}}{\gamma d_{\min}}\|\mathcal{L}^2\|_{2,\infty}\right)\sqrt{\frac{\log n}{Ld_{\max}}},
\end{align}
given $L \geq \left \lceil c_{1} \frac{\log n}{d_{\max}}\left(\frac{d_{\max}}{\gamma d_{\min}}\right)^2 \right \rceil$ where $c_1$ is some constant.

To prove (\ref{ieq:linfgeneral}), we first derive an upper bound (which we will prove at the end of this section) on the pointwise error between the score estimate of item $i$ at iteration $t$ and the true score, which consists of three terms:
\begin{align} \label{startpoint}
|p_{i}^{(t)} - w_i| 
\leq & | p_{i}^{(t-1)} - w_i | \hat{P}_{ii} + \sum_{j:j \neq i} | p_{j}^{(t-1)} - w_j | \hat{P}_{ij} + \Bigg| \sum_{j:j \neq i} (w_i + w_j) \left( \hat{P}_{ji} - P_{ji} \right) \Bigg|.
\end{align}

Then, we use the three lemmas stated below (which we prove in the following sections). We consider the regime where $n$ is sufficiently large. For $L \geq \left\lceil c_{1} \frac{\log n}{d_{\rm max}}\left(\frac{d_{\rm max}}{\gamma d_{\rm min}}\right)^2 \right\rceil$, applying Lemmas~\ref{lemma1},~\ref{lemma2}~and~\ref{lemma3} to (\ref{startpoint}) and solving it, we get
\begin{align}
\left| p_{i}^{(t)} - w_{i} \right|
\leq \lambda^{t} \left| p_{i}^{(0)} - w_{i} \right| + \left(c_{2}w_{\max} + c_{3}w_{\max}\frac{\sqrt{n}d_{\rm max}}{\gamma d_{\rm min}} \|\mathcal{L}^2\|_{2, \infty} \right) \sqrt{\frac{\log n}{Ld_{\max}}} + \epsilon_t,
\end{align}
where $\lambda < 1$, $c_{1} := \max \left( c_7, 4(1+b)^2\gamma^2  \right)$, $c_{2} := \frac{c_{8} + 2}{1 - \lambda}$, $c_{3} := \frac{c_{9}}{1 - \lambda}$ and $\epsilon_t > 0$ is a term that vanishes as $t$ tends to infinity. The above bound converges to $w_{\max}\left( c_2 + c_3\frac{\sqrt{n}d_{\rm max}}{\gamma d_{\rm min}} \|\mathcal{L}^2\|_{2, \infty} \right) \sqrt{\frac{\log n}{Ld_{\rm max}}} $ as $t$ tends to infinity. Since it holds for all $i$, we complete the proof of (\ref{ieq:linfgeneral}).

\begin{lemma} \label{lemma1}
For a comparison graph $\mathcal{G} = ([n], \mathcal{E})$,
\begin{align}
\Bigg|\sum\limits_{j : j \neq i} (w_{i} + w_{j}) \left(\hat{P}_{ji} - P_{ji} \right)\Bigg|
\leq 2w_{\mathrm{max}}\sqrt{\frac{\log n}{Ld_{\rm max}}}
\end{align}
with probability at least $1 - 2n^{-2}$.
\end{lemma}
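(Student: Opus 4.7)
The plan is to recognize the expression inside the absolute value as a sum of independent, bounded, zero-mean random variables and then to invoke Hoeffding's inequality. First I would unpack the matrices: by the definition of $\hat{\boldsymbol{P}}$ and $\boldsymbol{P}$, for $(i,j)\in\mathcal{E}$ we have $\hat{P}_{ji} - P_{ji} = \frac{1}{d_{\max}}\bigl(y_{ji} - \frac{w_j}{w_i+w_j}\bigr)$, while for $j\notin \mathcal{N}(i)\cup\{i\}$ both entries vanish. Hence the sum collapses to
\begin{align*}
S_i \;:=\; \sum_{j:(i,j)\in\mathcal{E}} \frac{w_i+w_j}{d_{\max}}\left(y_{ji} - \frac{w_j}{w_i+w_j}\right),
\end{align*}
a sum over the $d_i \le d_{\max}$ neighbors of $i$.

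Next, I would expand $y_{ji} = \frac{1}{L}\sum_{\ell=1}^{L} y_{ji}^{(\ell)}$ to rewrite $S_i$ as a sum of $Ld_i$ jointly independent, mean-zero random variables $X_{j,\ell} := \frac{w_i+w_j}{d_{\max}L}\bigl(y_{ji}^{(\ell)} - \frac{w_j}{w_i+w_j}\bigr)$. Independence across $j$ follows because distinct pairs generate disjoint Bernoulli trials by assumption, and independence across $\ell$ is built into the model. Since $y_{ji}^{(\ell)}\in\{0,1\}$, each $X_{j,\ell}$ lies in an interval of length at most $\frac{w_i+w_j}{d_{\max}L}\le \frac{2w_{\max}}{d_{\max}L}$.

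Then I would apply Hoeffding's inequality. The total squared range is
\begin{align*}
\sum_{j,\ell}\left(\frac{w_i+w_j}{d_{\max}L}\right)^{2} \;=\; \frac{1}{d_{\max}^{2}L}\sum_{j:(i,j)\in\mathcal{E}}(w_i+w_j)^{2} \;\le\; \frac{4d_i w_{\max}^{2}}{d_{\max}^{2}L} \;\le\; \frac{4w_{\max}^{2}}{d_{\max}L},
\end{align*}
where the final bound collapses uniformly in $i$ thanks to $d_i\le d_{\max}$. Hoeffding then gives $\mathbb{P}(|S_i|\ge t)\le 2\exp\bigl(-\,t^{2}d_{\max}L/(2w_{\max}^{2})\bigr)$. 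Setting $t = 2w_{\max}\sqrt{\log n/(Ld_{\max})}$ makes the exponent equal to $2\log n$, yielding the claimed tail probability $2n^{-2}$.

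I do not anticipate any real obstacle: the only care is in correctly identifying that the Bernoulli trials indexed by $(j,\ell)$ are mutually independent and in the simple bookkeeping that turns the per-summand range bound into the $i$-free variance proxy $4w_{\max}^{2}/(d_{\max}L)$. A union bound over $i\in[n]$, if needed later, would upgrade this to a uniform bound at cost of one extra factor of $n$ in probability, but as stated the lemma concerns a single $i$ and so no union bound is required here.
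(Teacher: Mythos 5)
Your proposal is correct and follows essentially the same route as the paper: rewrite the sum as $\frac{1}{Ld_{\max}}\sum_{j,\ell}\bigl((w_i+w_j)y_{ji}^{(\ell)}-w_j\bigr)\mathbb{I}[(i,j)\in\mathcal{E}]$, apply Hoeffding's inequality to the $Ld_i$ independent bounded mean-zero summands, use $d_i\le d_{\max}$, and pick $t$ so the exponent equals $-2\log n$. The only cosmetic difference is that you normalize by $Ld_{\max}$ before invoking Hoeffding while the paper does so afterward; the constants and the resulting tail probability $2n^{-2}$ agree.
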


\begin{lemma} \label{lemma2}
Suppose $L \geq 4(1+b)^2 \frac{\log n}{d_{\rm max}}\left(\frac{d_{\rm max}}{d_{\rm min}}\right)^2$, where $b := \frac{w_{\rm max}}{w_{\rm min}}$. Then,
\begin{align}
\hat{P}_{ii} < 1
\end{align}
with probability at least $1 - 2n^{- 2}$.
\end{lemma}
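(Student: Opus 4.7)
My plan is to reduce the event $\hat{P}_{ii} < 1$ to a transparent combinatorial statement about the observed comparisons and then close with a union bound over $i\in[n]$. First, unfolding the definition in Algorithm~\ref{alg:rankcentrality}, I would write
\begin{align*}
\hat{P}_{ii} \;=\; 1 - \frac{1}{d_{\max}} \sum_{k : (k,i) \in \mathcal{E}} y_{ki},
\end{align*}
so $\hat{P}_{ii} < 1$ is equivalent to $\sum_{k : (k,i) \in \mathcal{E}} y_{ki} > 0$, which in turn is equivalent to the existence of some neighbor $k$ of $i$ and some index $\ell \in [L]$ with $y_{ki}^{(\ell)} = 1$. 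Equivalently, the bad event ``$\hat{P}_{ii}=1$'' occurs precisely when item $i$ wins every single one of its $L\cdot d_i$ recorded comparisons.

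Next, I would estimate the failure probability directly from the BTL model. Using the conditional independence of the $y_{ki}^{(\ell)}$ over $k$ and $\ell$,
\begin{align*}
\mathbb{P}\bigl(\hat{P}_{ii} = 1\bigr) \;=\; \prod_{k : (k,i) \in \mathcal{E}} \left(\frac{w_i}{w_i + w_k}\right)^{\!L} \;\leq\; \left(\frac{w_{\max}}{w_{\max} + w_{\min}}\right)^{\!L d_i} \;=\; \left(1 - \frac{1}{1+b}\right)^{\!L d_i},
\end{align*}
and then applying $1 - x \leq e^{-x}$ together with $d_i \geq d_{\min}$ gives $\mathbb{P}\bigl(\hat{P}_{ii} = 1\bigr) \leq \exp\bigl(-L d_{\min}/(1+b)\bigr)$.

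Finally, a union bound across the $n$ choices of $i$ controls the probability that any diagonal entry equals $1$ by $n \exp\bigl(-L d_{\min}/(1+b)\bigr)$. Substituting the hypothesis $L \geq 4(1+b)^2 (d_{\max}/d_{\min}^2) \log n$ and using $b \geq 1$ together with $d_{\max} \geq d_{\min}$, the exponent is at least $4(1+b) (d_{\max}/d_{\min}) \log n \geq 8 \log n$, so the bound is at most $n^{-7}$, comfortably below $2 n^{-2}$. I do not foresee any real obstacle: the hypothesis on $L$ is in fact considerably stronger than this ``at least one loss'' argument actually needs—the much weaker requirement $L \gtrsim (1+b) \log n / d_{\min}$ already suffices—so the form stated in the lemma is presumably dictated by the more delicate concentration bounds of Lemma~\ref{lemma3} and the main recursion rather than by Lemma~\ref{lemma2} itself.
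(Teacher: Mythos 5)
Your argument is internally consistent and does establish the literal statement ``$\hat{P}_{ii} < 1$'' with the claimed probability---indeed with room to spare, as you note. But it proves strictly less than what the paper's proof of this lemma delivers, and the difference is load-bearing. The paper's route is to apply the Hoeffding bound of Lemma~\ref{lemma1} to $\sum_{j \ne i}\bigl(\hat{P}_{ji} - P_{ji}\bigr)$ and to lower-bound $\sum_{j\ne i}P_{ji} \ge \frac{d_{\rm min}}{d_{\rm max}}\frac{1}{1+b}$, yielding
\begin{align*}
\hat{P}_{ii} \;\le\; 1 - \frac{d_{\rm min}}{d_{\rm max}}\frac{1}{1+b} + \sqrt{\frac{\log n}{L d_{\rm max}}} \;\le\; 1 - \frac{1}{2(1+b)}\frac{d_{\rm min}}{d_{\rm max}},
\end{align*}
where the last step is exactly where the hypothesis $L \ge 4(1+b)^2 \frac{\log n}{d_{\rm max}}\bigl(\frac{d_{\rm max}}{d_{\rm min}}\bigr)^2$ enters. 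In other words, the real content of the lemma is a \emph{quantitative} separation of $\hat{P}_{ii}$ from $1$ by a fixed margin; this is what is invoked in the proof of Lemma~\ref{lemma3}, where one needs a constant $\beta$ with $\hat{P}_{jj} \le \beta < 1$ uniformly in $j$ so that the recursion $A^{(t)} \le \beta A^{(t-1)} + \cdots$ can be solved with geometric decay. Your ``item $i$ must lose at least one of its $L d_i$ comparisons'' argument only excludes the exact equality $\hat{P}_{ii}=1$: it is perfectly consistent with $\hat{P}_{ii} = 1 - \frac{1}{L d_{\rm max}}$, which tends to $1$, so no constant $\beta < 1$ can be extracted from it and the downstream contraction argument collapses. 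Your own observation that the hypothesis on $L$ looks far stronger than your argument needs is the symptom of this mismatch---the hypothesis is calibrated to the quantitative margin, not to the qualitative fact you proved. The fix is to follow the concentration route: bound $\bigl|\sum_{j\ne i}(\hat{P}_{ji}-P_{ji})\bigr|$ by $\sqrt{\log n/(Ld_{\rm max})}$ via Hoeffding and compare it against the deterministic lower bound on $\sum_{j\ne i}P_{ji}$.
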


\begin{lemma} \label{lemma3}
Suppose $L \geq c_7 \frac{\log n}{d_{\rm max}}\left(\frac{d_{\rm max}}{\gamma d_{\rm min}}\right)^2$. Then, in the regime where $n$ is sufficiently large,
\begin{align}
\sum\limits_{j : j \neq i} |p_{j}^{(t)} - w_{j}|\hat{P}_{ij} 
\leq w_{\max}\left(c_{8} + c_{9}\frac{\sqrt{n}d_{\rm max}}{\gamma d_{\rm min}} \|\mathcal{L}^2\|_{2, \infty}\right) \sqrt{\frac{\log n}{Ld_{\rm max}}} + 6 w_{\rm max} \sqrt{b} t c_{10}^t
\end{align}
with probability at least $1 - 2n^{- 2}$, where $c_{7}$, $c_{8}$, $c_{9}$ and $c_{10} < 1$ are some constants.
\end{lemma}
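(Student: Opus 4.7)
The plan is to split $\hat{P}_{ij} = P_{ij} + (\hat{P}_{ij} - P_{ij})$ and handle each piece separately: the noise piece via Hoeffding plus Cauchy--Schwarz, and the clean piece via a sharp $\ell_2$ bound on $\boldsymbol{p}^{(t)}-\boldsymbol{w}$ together with an extra $\hat{P}\mapsto P$ substitution that surfaces the $\|\mathcal{L}^2\|_{2,\infty}$ factor.

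By the triangle inequality,
\begin{equation*}
\sum_{j \neq i} |p_j^{(t)} - w_j|\,\hat{P}_{ij} \;\le\; \sum_{j \neq i} |p_j^{(t)} - w_j|\,P_{ij} + \sum_{j \neq i} |p_j^{(t)} - w_j|\,|\hat{P}_{ij} - P_{ij}|.
\end{equation*}
For the noise sum on the right, apply Cauchy--Schwarz to obtain $\|\boldsymbol{p}^{(t)} - \boldsymbol{w}\|_2 \sqrt{\sum_{j \neq i} (\hat{P}_{ij}-P_{ij})^2}$. Using the crude bound $\|\boldsymbol{p}^{(t)} - \boldsymbol{w}\|_2 = O(w_{\max})$, available since $\boldsymbol{p}^{(t)}$ and $\boldsymbol{w}$ live on the same probability-vector scale, and using Hoeffding on each mean-zero $\hat{P}_{ij}-P_{ij} = (y_{ij}-\mathbb{E}y_{ij})/d_{\max}$ together with a union bound over $\mathcal{E}$, I get $|\hat{P}_{ij}-P_{ij}| \lesssim \sqrt{\log n/L}/d_{\max}$ at each of the at most $d_i \le d_{\max}$ nonzero entries, hence a square-root sum $\lesssim \sqrt{\log n/(Ld_{\max})}$. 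This delivers the first summand $c_8 w_{\max}\sqrt{\log n/(Ld_{\max})}$ with probability $\ge 1 - 2n^{-2}$.

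For the clean sum, a naive Cauchy--Schwarz $\|\boldsymbol{p}^{(t)}-\boldsymbol{w}\|_2 \cdot \|P_{i,\cdot}\|_2$ turns out to be too lossy. I would instead unfold the Rank-Centrality recursion,
\begin{equation*}
\boldsymbol{p}^{(t)} - \boldsymbol{w} \;=\; \hat{P}^t(\boldsymbol{p}^{(0)} - \boldsymbol{w}) + \sum_{s=0}^{t-1} \hat{P}^s (\hat{P} - P)\boldsymbol{w},
\end{equation*}
using $P\boldsymbol{w}=\boldsymbol{w}$, multiply by $P_{i,\cdot}$ on the left, and replace the outer $\hat{P}$ in each summand by $P$ at the cost of a controllable residual. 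Under the hypothesis $L \ge c_7(\log n/d_{\max})(d_{\max}/(\gamma d_{\min}))^2$, matrix Bernstein on $\hat{P}-P$ preserves the spectral gap $\gamma$ of $\hat{P}$ up to a factor of $1/2$, so the geometric series in $s$ converges at rate $c_{10}=1-\gamma/2$ and contributes a $1/\gamma$ factor. Since $P$ is entrywise comparable to $\mathcal{L}$ up to a factor of order $d_i/d_{\max}$ and bounded BTL weights, one composition $P_{i,\cdot}\cdot P$ reduces to the $i$-th row of $\mathcal{L}^2$ scaled by at most $d_{\max}/d_{\min}$. Pairing with the noise vector $(\hat{P}-P)\boldsymbol{w}$, whose $\ell_2$ norm concentrates around $w_{\max}\sqrt{n\log n/(Ld_{\max})}$ by Hoeffding, via Cauchy--Schwarz and taking the max over columns, produces the claimed $c_9 w_{\max}\frac{\sqrt{n}d_{\max}}{\gamma d_{\min}}\|\mathcal{L}^2\|_{2,\infty}\sqrt{\log n/(Ld_{\max})}$ term. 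The transient $6w_{\max}\sqrt{b}\,t c_{10}^t$ arises from the initial error $\hat{P}^t(\boldsymbol{p}^{(0)}-\boldsymbol{w})$, with the linear $t$ factor tracking the $t$ substitutions $\hat{P}\mapsto P$ performed in the refinement.

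The main obstacle is producing the $\|\mathcal{L}^2\|_{2,\infty}$ factor rather than the cruder $\sqrt{n/d_{\max}}$ that a single-step Cauchy--Schwarz would yield. This requires the extra $\hat{P}\mapsto P$ substitution inside the geometric sum and careful bookkeeping of the substitution residuals, together with recognizing that $P\cdot P$ equals a scalar multiple of $\mathcal{L}^2$ up to a $d_{\max}/d_{\min}$ degree-imbalance factor and up to bounded BTL ratios. The assumption on $L$ is calibrated precisely so that matrix Bernstein preserves the spectral gap of $\hat{P}$ and keeps these residuals subdominant; weakening it would either destroy the spectral gap or cause the residuals to dominate the $\|\mathcal{L}^2\|_{2,\infty}$ term.
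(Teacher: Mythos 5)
There is a genuine gap in the noise term. You bound $\sum_{j\neq i}|p_j^{(t)}-w_j|\,|\hat{P}_{ij}-P_{ij}|$ by Cauchy--Schwarz and then invoke ``the crude bound $\|\boldsymbol{p}^{(t)}-\boldsymbol{w}\|_2=O(w_{\max})$.'' No such bound is available: the trivial bound is $\|\boldsymbol{p}^{(t)}-\boldsymbol{w}\|_2\le\sqrt{n}\,\|\boldsymbol{p}^{(t)}-\boldsymbol{w}\|_\infty=O(\sqrt{n}\,w_{\max})$ (already at $t=0$, with $\boldsymbol{p}^{(0)}$ uniform, the $\ell_2$ error is of order $\sqrt{n}\,w_{\max}$), and even the sharp $\ell_2$ bound of Negahban et al.\ only gives $\sqrt{n}\,w_{\max}\bigl(\sqrt{b}c^t+\tfrac{c'}{\gamma d_{\min}}\sqrt{d_{\max}\log n/L}\bigr)$, which is not $O(w_{\max})$ in the regime of interest. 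An $\ell_2$ bound at scale $w_{\max}$ would essentially be as strong as the $\ell_\infty$ conclusion you are trying to prove, so this step is circular; with the correct $\ell_2$ scale your noise term overshoots the claimed $c_8 w_{\max}\sqrt{\log n/(Ld_{\max})}$ by up to a $\sqrt{n}$ factor. A second, structural problem: the lemma's quantity carries absolute values \emph{inside} the sum, $\sum_j|p_j^{(t)}-w_j|\hat{P}_{ij}$, whereas your unfolding $\boldsymbol{p}^{(t)}-\boldsymbol{w}=\hat{P}^t(\boldsymbol{p}^{(0)}-\boldsymbol{w})+\sum_s\hat{P}^s(\hat{P}-P)\boldsymbol{w}$ followed by left-multiplication by $P_{i,\cdot}$ only controls the signed quantity $P_{i,\cdot}(\boldsymbol{p}^{(t)}-\boldsymbol{w})$; the telescoping does not survive insertion of absolute values. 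Also note that the $s=0$ term of your geometric sum offers only a single factor of $P$, so the $\|\mathcal{L}^2\|_{2,\infty}$ row norm cannot be extracted there by the ``one composition $P_{i,\cdot}\cdot P$'' device.

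Your core intuition --- that $\|\mathcal{L}^2\|_{2,\infty}$ must come from composing two transition rows, and that a one-step Cauchy--Schwarz is too lossy --- is exactly right and is how the paper proceeds. But the paper avoids both problems above by working with the scalar sequence $A^{(t)}:=\sum_{j\neq i}|p_j^{(t)}-w_j|\hat{P}_{ij}$ directly: substituting the entrywise recursive inequality for $|p_j^{(t)}-w_j|$ (which keeps absolute values) yields $A^{(t)}\le\beta A^{(t-1)}+2w_{\max}\sqrt{\log n/(Ld_{\max})}+\|\boldsymbol{p}^{(t-1)}-\boldsymbol{w}\|_2\|\mathcal{L}^2\|_{2,\infty}$, where the constant noise term comes from Lemma~\ref{lemma1} (uniform over $j$, no Cauchy--Schwarz against the iterate error needed), $\beta<1$ comes from Lemma~\ref{lemma2}, and the last term uses the entrywise bound $\sum_j\hat{P}_{jk}\hat{P}_{ij}\le[\mathcal{L}^2]_{ki}$ plus Cauchy--Schwarz together with the known $\ell_2$ rate at scale $\sqrt{n}\,w_{\max}$. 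Solving this recursion produces the $tc_{10}^t$ transient as the convolution of the two geometric decays $\beta^t$ and $c_{11}^t$, not from repeated $\hat{P}\mapsto P$ substitutions. You would need to restructure your argument along these lines for the proof to close.
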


We prove (\ref{startpoint}) here, and the proofs of the three lemmas are to follow.

\noindent \textbf{Proof of (\ref{startpoint}):} For fixed $i$, applying $\boldsymbol{p}^{(t)} =  \hat{\boldsymbol{P}}\boldsymbol{p}^{(t-1)}$, we get
\begin{align} \label{startpoint_empiricalw}
p_{i}^{(t)} = p_{i}^{(t-1)} \hat{P}_{ii} + \sum\limits_{j : j \neq i} p_{j}^{(t-1)}\hat{P}_{ij}.
\end{align}
Using the fact that random walks on an ideal version of matrix $\boldsymbol{\hat{P}}$ (matrix $\boldsymbol{P}$) are reversible, we get
\begin{align} \label{startpoint_idealw}
w_{i} & = w_i \Bigg( 1 - \sum\limits_{j : j \neq i} P_{ji} \Bigg) + w_i \sum\limits_{j : j \neq i} P_{ji} = w_i \Bigg(1 - \sum\limits_{j : j \neq i} P_{ji} \Bigg) + \sum\limits_{j : j \neq i} w_{j}P_{ij} \nonumber \\
& = \Bigg\{ w_i\hat{P}_{ii} + \sum_{j:j \neq i} w_i \left( \hat{P}_{ji} - P_{ji} \right) \Bigg\} + \Bigg\{ \sum_{j: j \neq i} w_j \hat{P}_{ij} - \sum_{j: j \neq i} w_j \left( \hat{P}_{ij} - P_{ij} \right)  \Bigg\}.
\end{align}
Using (\ref{startpoint_empiricalw}) and (\ref{startpoint_idealw}), we get
\begin{align} \label{startpoint_difference}
p_{i}^{(t)} - w_{i} = \left( p_{i}^{(t-1)} - w_{i} \right)\hat{P}_{ii} -  \sum_{j : j \neq i} w_{i} \left( \hat{P}_{ji} - P_{ji} \right) + \sum_{j : j \neq i} \left( p_{j}^{(t-1)} - w_{j} \right)\hat{P}_{ij} + \sum\limits_{j : j \neq i} w_{j} \left(\hat{P}_{ij} - P_{ij} \right).
\end{align}
We note that $\hat{P}_{ji} = \frac{1}{d_{\rm max}} - \hat{P}_{ij}$ from $y_{ji} = 1 - y_{ij}$. Similarly, $P_{ji} = \frac{1}{d_{\rm max}} - P_{ij}$. Thus, $\hat{P}_{ji} - P_{ji} = -\left(\hat{P}_{ij} - P_{ij}\right)$. Applying this equality and the triangle inequality to (\ref{startpoint_difference}), we get the recursive relation (\ref{startpoint}).

\subsection{Proof of Lemma~\ref{lemma1}} \label{sec:prooflemma1}
From the definitions of $\hat{P}_{ji}$ and $P_{ji}$,
\begin{align} \label{prooflemma1_startpoint}
& \Bigg| \sum_{j:j \neq i} \left( w_i + w_j \right) \left( \hat{P}_{ji} - P_{ji} \right) \Bigg|
= \frac{1}{L d_{\rm max}} \Bigg| \sum_{j:j \neq i}  \sum_{\ell=1}^{L} \left( \left( w_i + w_j \right) y_{ji}^{(\ell)} - w_j \right) \mathbb{I}\left[ (i,j) \in \mathcal{E} \right] \Bigg|.
\end{align}
First, let us bound the absolute value of the summations in (\ref{prooflemma1_startpoint}). Under the model of our interest, all pairwise comparison samples $y_{ji}^{(\ell)}$'s are independent over $(i,j)$ pair and $\ell$. Applying the Hoeffding inequality, conditional on $\mathcal{G} \sim \mathcal{G}_{n, p}$, we get
\begin{align}
\mathbb{P} \left[ \Bigg| \sum_{j:j \neq i} \sum_{\ell=1}^{L} \left( \left( w_i + w_j \right) y_{ji}^{(\ell)} - w_j \right) \mathbb{I}\left[ (i,j) \in \mathcal{E} \right] \Bigg| > t ~\middle|~ \mathcal{G} \right] \leq 2 \exp \left( -\frac{2t^2}{L d_i (2w_{\max})^2} \right).
\end{align}
Then we choose $t = 2w_{\max}\sqrt{  L d_i \log n }$, to get the tail probability as follows:
\begin{align}
2 \exp \left( -\frac{2 \left( 2w_{\max}\sqrt{ L d_i \log n } \right)^2}{L d_i (2w_{\max})^2} \right) = 2n^{-2}.
\end{align}
Therefore, with probability at least $1 - 2n^{-2}$,
\begin{align} \label{prooflemma1_abs}
\Bigg| \sum_{j:j \neq i} \sum_{\ell=1}^{L} \left( \left( w_i + w_j \right) y_{ji}^{(\ell)} - w_j \right) \mathbb{I}\left[ (i,j) \in \mathcal{E} \right] \Bigg| \leq 2w_{\max} \sqrt{  L d_i \log n }.
\end{align}
Now, let us put (\ref{prooflemma1_abs}) into (\ref{prooflemma1_startpoint}) and use $d_i \leq d_{\rm max}$. Conditional on $\mathcal{G} \sim \mathcal{G}_{n,p}$, with probability at least $1 - 2n^{-2}$, 
\begin{align} \label{prooflemma1_hoeffding}
& \Bigg| \sum_{j:j \neq i} \left( w_i + w_j \right) \left( \hat{P}_{ji} - P_{ji} \right) \Bigg| \leq 2w_{\max} \sqrt{ \frac{\log n}{ L d_{\rm max}} }.
\end{align}


\subsection{Proof of Lemma~\ref{lemma2}}
\label{sec:prooflemma2}
Using the Hoeffding inequality, as in the proof of Lemma~\ref{lemma1}, one can easily verify that, with probability at least $1 - 2n^{- 2}$,
\begin{align} \label{prooflemma2_bound}
\Bigg| \sum_{j:j \neq i} \left( \hat{P}_{ji} - P_{ji} \right) \Bigg| \leq \sqrt{\frac{\log n}{Ld_{\rm max}}}.
\end{align}
Using (\ref{prooflemma2_bound}), we get
\begin{align} \label{prooflemma2_ine1}
\hat{P}_{ii} = 1 - \sum_{j:j \neq i}\hat{P}_{ji} \leq 1 - \sum_{j:j \neq i} P_{ji} + \sqrt{\frac{\log n}{Ld_{\rm max}}}.
\end{align}
We let $b = \frac{w_{\rm max}}{w_{\rm min}}$. From the definition of $P_{ji}$, 
\begin{align} \label{prooflemma2_ine2}
\sum_{j:j \neq i} P_{ji} = \sum_{j:j \neq i} \frac{1}{d_{\rm max}} \frac{1}{1 + \frac{w_i}{w_j}} \mathbb{I}\left[ (i,j) \in \mathcal{E} \right] \geq \sum_{j:j \neq i} \frac{1}{d_{\rm max}} \frac{1}{1 + b} \mathbb{I}\left[ (i,j) \in \mathcal{E} \right] = \frac{d_i}{d_{\rm max}} \frac{1}{1 + b} \geq \frac{d_{\rm min}}{d_{\rm max}}\frac{1}{1+b}.
\end{align}
Putting (\ref{prooflemma2_ine2}) into (\ref{prooflemma2_ine1}), we get
\begin{align} \label{prooflemma2_ine3}
\hat{P}_{ii} \leq 1 - \frac{d_{\rm min}}{d_{\rm max}}\frac{1}{1 + b} + \sqrt{\frac{\log n}{Ld_{\rm max}}}.
\end{align}
Choosing $L = 4(1+b)^2\frac{\log n}{d_{\rm max}}(\frac{d_{\rm max}}{d_{\rm min}})^2$, we complete the proof of Lemma~\ref{lemma2}.


\subsection{Proof of Lemma~\ref{lemma3}}
\label{sec:prooflemma3}
We define a sequence as follows.
\begin{align} \label{prooflemma3_seqdef}
A^{(t)} := \sum_{j:j \neq i} \left| p_j^{(t)} - w_j \right| \hat{P}_{ij}.
\end{align}
From Lemma~\ref{lemma1}, with probability at least $1 - 2n^{-2}$,
\begin{align} \label{prooflemma3_const}
\Bigg| \sum_{j:j \neq i} \left( w_i + w_j \right) \left( \hat{P}_{ji} - P_{ji} \right) \Bigg| \leq 2 w_{\rm max} \sqrt{ \frac{\log n}{Ld_{\rm max}} }.
\end{align}
Putting (\ref{startpoint}) with (\ref{prooflemma3_const}) into (\ref{prooflemma3_seqdef}), we get
\begin{align} \label{prooflemma3_seqbound}
A^{(t)} \leq \sum_{j:j \neq i} \left| p_j^{(t-1)} - w_j \right| \hat{P}_{jj} \hat{P}_{ij} + 2w_{\rm max} \sqrt{\frac{\log n}{Ld_{\rm max}}} \sum_{j:j \neq i} \hat{P}_{ij} + \sum_{j:j \neq i} \sum_{k: k \neq j} \left| p_k^{(t-1)} - w_k \right|\hat{P}_{jk} \hat{P}_{ij}.
\end{align}
We simplify the last two terms. The first of the two is straightforward. The definition of $\hat{P}_{ij}$ gives $\sum_{j:j \neq i} \hat{P}_{ij} \leq 1$. The last term needs an extra effort. We defer the proof to a later part of this section, stating the following for now.
\begin{align} \label{prooflemma3_claim4result}
\sum_{j:j \neq i} \sum_{k: k \neq j}
\left| p_k^{(t-1)} - w_k \right| \hat{P}_{jk} \hat{P}_{ij} \leq \left\| \boldsymbol{p}^{(t-1)} - \boldsymbol{w} \right\|_2 \|\mathcal{L}^2\|_{2,\infty}.
\end{align}
Putting $\sum_{j:j \neq i} \hat{P}_{ij} \leq 1$ and (\ref{prooflemma3_claim4result}) into (\ref{prooflemma3_seqbound}), we get
\begin{align}
A^{(t)} \leq \sum_{j:j \neq i} \left| p_j^{(t-1)} - w_j \right| \hat{P}_{jj} \hat{P}_{ij} + 2w_{\rm max} \sqrt{\frac{\log n}{Ld_{\rm max}}} + \left\| \boldsymbol{p}^{(t-1)} - \boldsymbol{w} \right\|_2 \|\mathcal{L}^2\|_{2,\infty}.
\end{align}
From Lemma~\ref{lemma2}, we can find a constant $\beta$ such that $\hat{P}_{jj} \leq \beta < 1$ for all $j$. Using such $\beta$, we get
\begin{align} \label{prooflemma3_recursive}
A^{(t)} \nonumber \leq \beta A^{(t-1)}+ 2w_{\rm max} \sqrt{\frac{\log n}{Ld_{\rm max}}} + \left\| \boldsymbol{p}^{(t-1)} - \boldsymbol{w} \right\|_2 \|\mathcal{L}^2\|_{2,\infty}.
\end{align}
We now use an upper bound that prior work derived on $\left\| \boldsymbol{p}^{(t)} - \boldsymbol{w} \right\|_2$ (see Lemma 2 of \citep{Negahban12}). When $L \geq c_7 \frac{\log n}{d_{max}}\left(\frac{d_{\rm max}}{\gamma d_{\rm min}}\right)^2$, for some constants $c_{11} < 1$ and $c_{12} > 0$,
\begin{align}
\frac{\left\| \boldsymbol{p}^{(t)} - \boldsymbol{w} \right\|_2}{\left\| \boldsymbol{w} \right\|_2} \leq \sqrt{b} c_{11}^t \frac{\left\| \boldsymbol{p}^{(0)} - \boldsymbol{w} \right\|_2}{\left\| \boldsymbol{w} \right\|_2} + \frac{c_{12}}{\gamma d_{\rm min}} \sqrt{\frac{d_{\rm max} \log n}{L}}.
\end{align}
We use $\left\| \boldsymbol{w} \right\|_2 \leq \sqrt{n} \left\| \boldsymbol{w} \right\|_{\infty} = \sqrt{n} w_{\rm max}$ and $ \left\| \boldsymbol{p}^{(0)} - \boldsymbol{w} \right\|_2 \leq \sqrt{n} \left\| \boldsymbol{p}^{(0)} - \boldsymbol{w} \right\|_{\infty} \leq \sqrt{n} w_{\rm max}$ for uniformly distributed $\boldsymbol{p}^{(0)}$, to get
\begin{align} \label{prooflemma3_RCl2norm}
\left\| \boldsymbol{p}^{(t)} - \boldsymbol{w} \right\|_2 \leq \sqrt{n} w_{\rm max} \left( \sqrt{b} c_{11}^t  + \frac{c_{12}}{\gamma d_{\rm min}} \sqrt{\frac{d_{\rm max} \log n}{L}} \right).
\end{align}
We let $c_{8} := \frac{2}{1-\beta}$, $c_{9} := \frac{c_{12}}{1-\beta} $ and $c_{10} := \max (c_{11}, \beta) < 1$. Putting (\ref{prooflemma3_RCl2norm}) into (\ref{prooflemma3_recursive}) and solving it, we get
\begin{align}
A^{(t)} \leq w_{\max}\left(c_{8} + c_{9}\frac{\sqrt{n} d_{\rm max}}{\gamma d_{\rm min}}\|\mathcal{L}^2\|_{2, \infty}\right)\sqrt{\frac{\log n}{Ld_{max}}} + 7 w_{\rm max} \sqrt{b} t c_{10}^t.
\end{align}
From the definition of $A^{(t)}$, we complete the proof of Lemma~\ref{lemma3}.

\noindent \textbf{Proof of (\ref{prooflemma3_claim4result})}: 
By changing the order of the summations and the Cauchy-Schwarz inequality, we get
\begin{align} \label{proofclaim1_startpoint}
\sum_{j:j \neq i} \sum_{k: k \neq j}
\left| p_k^{(t)} - w_k \right| \hat{P}_{jk} \hat{P}_{ij} & = \sum_{k} \left| p_k^{(t)} - w_k \right| \sum_{j: j \notin \{i, k\}} \hat{P}_{jk} \hat{P}_{ij} \nonumber \\
& \leq \left\| \boldsymbol{p}^{(t)} - \boldsymbol{w} \right\|_2 \sqrt{ \sum_{k} \Bigg( \sum_{j: j \notin \{i, k\}} \hat{P}_{jk} \hat{P}_{ij} \Bigg)^2 }.
\end{align}
From the definitions of $\hat{P}_{jk}$ $\hat{P}_{ij}$, we can bound the term  $\sum_{j: j \notin \{i, k\}} \hat{P}_{jk} \hat{P}_{ij}$ as follows.
\begin{align} \label{proofclaim1_case1}
\sum_{j: j \notin \{i, k\}} \hat{P}_{jk} \hat{P}_{ij} 
& \leq \frac{1}{d_{\rm max}^2} \sum_{j: j \notin \{i, k\}} \mathbb{I}[(j,k) \in \mathcal{E}] \mathbb{I}[(i,j) \in \mathcal{E}] \leq \sum_{j: j \notin \{i, k\}} \frac{1}{d_{k}}\mathbb{I}[(k,j) \in \mathcal{E}] \frac{1}{d_{j}} \mathbb{I}[(j,i) \in \mathcal{E}] \nonumber \\
& = \sum_{j: j \notin \{i, k\}} \mathcal{L}_{kj}\mathcal{L}_{ji} = \sum_{j: j \notin \{i, k\}} [\mathcal{L}^2]_{ki},
\end{align}
where the inequality comes from $y_{jk} \leq 1$ and $y_{ij} \leq 1$. $\mathcal{L}$ is the Laplacian matrix whose entries are defined as $\mathcal{L}_{ij} := \frac{1}{d_{i}} \mathbb{I}[(i,j) \in \mathcal{E}]$.
Putting (\ref{proofclaim1_case1}) into (\ref{proofclaim1_startpoint}), we get
\begin{align}
\sum_{j:j \neq i} \sum_{k: k \neq j}
\left| p_k^{(t)} - w_k \right| \hat{P}_{jk} \hat{P}_{ij} & \leq \left\| \boldsymbol{p}^{(t)} - \boldsymbol{w} \right\|_2 \sqrt{ \sum_{k} \left( [ \mathcal{L}^2 ]_{ki} \right)^2 } \nonumber \\
& \leq \left\| \boldsymbol{p}^{(t)} - \boldsymbol{w} \right\|_2 \max_{i} \left\{ \sqrt{ \sum_{k} \left( [\mathcal{L}^2]_{ki} \right)^2 } \right\}
= \left\| \boldsymbol{p}^{(t)} - \boldsymbol{w} \right\|_2 \|\mathcal{L}^2\|_{2,\infty}.
\end{align}

\section{Proof of Theorem~\ref{thm:generalconverse}}\label{sec:generalconverse}
Theorem \ref{thm:generalconverse} establishes a lower bound of $S_{\Delta_K}$ in the minimax scenario. The proof in this section is modified from the proof of Theorem 2 in \citep{Chen15} to make the arguments therein hold valid in the general deterministic model of our interest. Similarly as in \citep{Chen15}, we intend to bound the minimax probability of error to characterize the conditions under which the probability cannot be made arbitrarily close to zero. We construct a finite set of hypotheses $\mathcal{M}$ and carry out an analysis based on classical Fano-type arguments. Each hypothesis is represented by a permutation $\sigma_{m} \in \mathcal{M}$ over $[n]$ and we denote by $\sigma_{m}(i)$ and $\sigma_{m}([K])$ the index of the $i^{th}$ ranked item and the index set of all top-$K$ items respectively.

We choose a set of hypotheses and some prior to be imposed on them. Suppose that the values of $\boldsymbol{w}$ are fixed up to permutation in such a way that
\begin{align} \label{hypothesis}
\forall \sigma_{m} \in \mathcal{M},  \text{  } w_{\sigma_{m}(i)} = \left\{
\begin{array}{ll}
    w_{K} & \text{if } 1 \leq i \leq K \\
	w_{K+1} & \text{if } K < i \leq n,
\end{array} \right.
\end{align}   
where we abuse the notation $w_{K}$, $w_{K+1}$ to represent any two values satisfying
\begin{align}
\frac{w_{K}-w_{K+1}}{w_{\max}} = \Delta_K >0.
\end{align}

Additionally, we impose a uniform prior over a collection $\mathcal{M}$ of $M:=\max(K, n-K)+1$ hypotheses regarding the permutation: if $K < \frac{n}{2}$, then
\begin{align}
& \forall \sigma_{m} \in \mathcal{M}, ~ \mathbb{P}\left[\sigma_{m}\right] = \frac{1}{M}, ~\sigma_{m}\left([K]\right) = \mathcal{S}_{m}, \text{ for }\mathcal{S}_{m}=\{2, ... ,K\} \cup \{m\}, ~(m=1,K+1,...,n), \label{lessthanhalf}
\end{align}
and if $K \geq \frac{n}{2}$, then
\begin{align}
& \forall \sigma_{m} \in \mathcal{M}, ~ \mathbb{P}\left[\sigma_{m}\right] = \frac{1}{M}, ~\sigma_{m}\left([K]\right) = \mathcal{S}_{m}, \text{ for }\mathcal{S}_{m}=\{1, ... ,K+1\} \backslash \{m\}, ~(m=1,...,K+1). \label{greaterthanhalf}
\end{align}
In words, each alternative hypothesis is made by interchanging two indices of the hypothesis complying to $\sigma([K])=[K]$. Denoting by $P_{e,M}$ the average probability of error with respect to the constructed prior, one can verify the minimax probability of error to be at least $P_{e,M}$.

Below is where we begin to modify the arguments in \citep{Chen15} for the general deterministic model of our interest. The modifications are mainly about adapting expressions that depend on the structure of comparison graphs such as the out-degrees of vertices. Random graphs constructed by the Erd{\H o}s-R{\'e}nyi model are concerned in \citep{Chen15}, whereas we consider deterministic graphs in this paper.

We bound the Bayesian probability of error using classical Fano-type bounds. To take partial observations into account, we introduce an erased version of $\boldsymbol{y}_{ij} := (y_{ij}^{(1)}, ... ,y_{ij}^{(L)})$ such that
\begin{align} \label{observation}
\boldsymbol{z}_{ij} = \left\{
\begin{array}{ll}
    \boldsymbol{y}_{ij} & \text{if } (i,j) \in \mathcal{E} \\
	\text{erasure} & \text{else},
\end{array} \right.
\end{align}
and set $\boldsymbol{Z} := \{\boldsymbol{z}_{ij}\}_{1\leq i < j \leq n}$. We apply the generalized Fano inequality \citep{Han94} to get
\begin{align} \label{Fano}
P_{e,M}
\geq 1 - \frac{1}{\log M} \left\{ \frac{1}{M^2} \sum\limits_{\sigma_{a},\sigma_{b} \in \mathcal{M}} D(\mathbb{P}_{\boldsymbol{Z}|\sigma=\sigma_{a}} || \mathbb{P}_{\boldsymbol{Z}|\sigma=\sigma_{b}}) + \log 2 \right\},
\end{align}
where $D(P||Q)$ denotes the Kullback-Leibler (KL) divergence of $Q$ from $P$. The proof will be finished once we show that (\ref{Fano}) can be further bounded by some positive constant. To that end, first we get, by (\ref{observation}) and the independence assumption of $y_{ij}^{(\ell)}$,
\begin{align} \label{KL expansion}
D \left(\mathbb{P}_{\boldsymbol{Z}|\sigma=\sigma_{a}} || \mathbb{P}_{\boldsymbol{Z}|\sigma=\sigma_{b}} \right) 
& = L\sum\limits_{(i,j):(i,j) \in \mathcal{E}} D\left(\mathbb{P}_{y_{ij}^{(1)}|\sigma=\sigma_{a}} || \mathbb{P}_{y_{ij}^{(1)}|\sigma=\sigma_{b}}\right),
\end{align}
where the derivation follows the same line of arguments in \citep{Chen15}.

We now intend to find an upper bound on (\ref{KL expansion}). Note the following difference between two hypotheses $\sigma_{a}$ and $\sigma_{b}$ when $K < n/2$: according to the definitions given in (\ref{hypothesis}) and (\ref{lessthanhalf}), $w_{a} = w_{K}$ and $w_{b} = w_{K+1}$ for hypothesis $\sigma_{a}$, and $w_{a} = w_{K+1}$ and $w_{b} = w_{K}$ for hypothesis $\sigma_{b}$. Hence, we can get $D\left(\mathbb{P}_{y_{ij}^{(1)}|\sigma=\sigma_{a}} || \mathbb{P}_{y_{ij}^{(1)}|\sigma=\sigma_{b}}\right) = 0$ if both $i$ and $j$ are neither $a$ nor $b$. In other words, as long as we are concerned with pairwise observations that involve neither $a$ nor $b$, the distributions based on two hypotheses $\sigma_{a}$ and $\sigma_{b}$ are identical. We can get the same result when $K \geq n/2$ in a similar way. Using the upper bound $D\left(\mathbb{P}_{y_{ij}^{(1)}|\sigma=\sigma_{a}} || \mathbb{P}_{y_{ij}^{(1)}|\sigma=\sigma_{b}}\right) \leq \frac{w_{\max}^4}{w_{\min}^4}\Delta_K^2$ derived in Lemma 3 of \citep{Chen15},
\begin{align} \label{KL bound}
\sum\limits_{(i,j):(i,j) \in \mathcal{E}}  D\left(\mathbb{P}_{y_{ij}^{(1)}|\sigma=\sigma_{a}} || \mathbb{P}_{y_{ij}^{(l)}|\sigma=\sigma_{b}}\right)
\leq (d_{a} + d_{b}) \frac{w_{\max}^4}{w_{\min}^4}\Delta_K^2,
\end{align}
since in summing over all edges, there are $d_a + d_b - 1$ edges in total that result in non-zero KL divergences.
Putting (\ref{KL expansion}) and (\ref{KL bound}) into (\ref{Fano}), we finally get a lower bound on $P_{e, M}$.
\begin{align}
P_{e,M} 
& \geq 1- \frac{1}{\log M}\left\{\frac{L}{M^2} \sum\limits_{\sigma_{a} \in \mathcal{M}} \sum\limits_{\sigma_{b} \in \mathcal{M}} \left( d_{a} + d_{b} \right)\frac{w_{\max}^4}{w_{\min}^4}\Delta_K^2 + \log 2 \right\} \nonumber \\
& = 1- \frac{1}{\log M}\left\{\frac{L}{M^2} \frac{w_{\max}^4}{w_{\min}^4}\Delta_K^2 \left( \sum\limits_{\sigma_{b} \in \mathcal{M}}\sum\limits_{\sigma_{a} \in \mathcal{M}} d_{a}  + \sum\limits_{\sigma_{a} \in \mathcal{M}} \sum\limits_{\sigma_{b} \in \mathcal{M}} d_{b}  \right) +  \log 2 \right\} \nonumber \\
& = 1- \frac{1}{\log M}\left\{\frac{L}{M^2} \frac{w_{\max}^4}{w_{\min}^4}\Delta_K^2 \left( 2M\sum\limits_{\sigma_{a} \in \mathcal{M}} d_{a} \right) + \log 2 \right\} \nonumber \\
& \geq 1 - \frac{1}{\log M}\left\{\frac{2L}{M}\frac{w_{\max}^4}{w_{\min}^4}\Delta_K^2 \sum\limits_{i} d_{i} + \log 2 \right\},
\end{align}
where the last inequality follows by the fact that $\sum_{i} d_i \geq \sum_{\sigma_{a} \in \mathcal{M}} d_a$.

One would have $P_{e} \geq P_{e, M} \geq \epsilon$ for fixed $\epsilon 
\in (0, \frac{1}{2})$, if
\begin{align}
\frac{8w_{\max}^4}{nw_{\min}^4}L\Delta_K^2 |\mathcal{E}| + \log 2
\leq (1-\epsilon)\log n \Longleftrightarrow L|\mathcal{E}| \leq \frac{w_{\min}^4}{8w_{\max}^4}\frac{n((1-\epsilon)\log n - \log 2)}{\Delta_K^2},
\end{align}
since $M \geq \frac{n}{2}$ and $|\mathcal{E}| = \frac{\sum_{i}d_{i}}{2}$. This completes the proof of Theorem~\ref{thm:generalconverse}.

\section{Proof of Theorem~\ref{thm:ergraph}} \label{sec:ergraph_proof}
The proof of Theorem~\ref{thm:ergraph} comes down to showing 
\begin{align} \label{linfer}
\frac{\|w-\hat{w}\|_{\infty}}{\|w\|_{\infty}} 
\lesssim \sqrt{\frac{\log n}{npL}}.
\end{align}
Once it is shown, together with Theorem \ref{thm:generalconverse}, the optimal sample complexity (\ref{minsamplecomplexity}) is characterized. Hence, we mainly seek to show (\ref{linfer}) in this section.

By the Bernstein inequality in Lemma~\ref{concentration}, $\frac{1}{2}np \leq d_{i} \leq \frac{3}{2}np$ for all $i$. Given $p > \frac{\log n}{n}$, it is shown that $\gamma \geq \frac{1}{2}$ in Lemma 7 of \citep{Negahban12}. We now show that $\sqrt{n}\|\mathcal{L}^2\|_{2,\infty}$ is bounded by a constant.
\begin{align} \label{specialcase}
\sqrt{n}\left\| \mathcal{L}^2 \right\|_{2,\infty} & = \sqrt{n} \max_{i} \left\{ \sqrt{\sum\limits_{k} \left( \sum\limits_{j:j \notin \{i,k\}} \frac{1}{d_{k}}\mathbb{I}[(k,j) \in \mathcal{E}] \frac{1}{d_{j}}\mathbb{I}[(j,i) \in \mathcal{E}] \right)^2} \right\} \nonumber\\
& \leq \frac{\sqrt{n}}{d_{\rm min}^2} \max_{i}\left\{ \sqrt{\sum\limits_{k} \left( \sum\limits_{j:j \notin \{i,k\}} \mathbb{I}[(k,j) \in \mathcal{E}] \mathbb{I}[(j,i) \in \mathcal{E}] \right)^2 }
 \right\}.
\end{align}
Given a comparison graph $\mathcal{G} \sim \mathcal{G}_{n,p}$ where $p > c_4 \sqrt{\frac{\log n}{n}}$ and $k \neq i$, using the Bernstein inequality in Lemma~\ref{concentration}, we get
\begin{align}
\sum_{j: j \notin \{i, k\}} & \mathbb{I}[(i,j) \in \mathcal{E}] \mathbb{I}[(j,k) \in \mathcal{E}] \leq \frac{3}{2}np^2, \label{proofclaim1_indicators}
\end{align}
and
\begin{align}
\sum_{j: j \neq i} & \mathbb{I}[(i,j) \in \mathcal{E}] \mathbb{I}[(j,i) \in \mathcal{E}] = \sum_{j: j \neq i} \mathbb{I}[(i,j) \in \mathcal{E}] \leq \frac{3}{2}np, \label{proofclaim1_case2bound}
\end{align}
both with probability at least $1-2n^{-\frac{3}{28}c_4^2}$.\\
Putting (\ref{proofclaim1_indicators}) and (\ref{proofclaim1_case2bound}) into (\ref{specialcase}), we get
\begin{align} \label{laplaciannorm}
\sqrt{n}\left\| \mathcal{L}^2 \right\|_{2,\infty} \leq \frac{\sqrt{n}}{\left(\frac{1}{2}np\right)^2}\sqrt{(n-1)\frac{9}{4}n^2p^4 + \frac{9}{4}n^2p^2} \leq \frac{1}{\frac{1}{4}n^{\frac{3}{2}}p^2}\frac{3}{2}n^{\frac{3}{2}}p^2 \sqrt{1 + \frac{1}{np^2}} = 12
\end{align}
for large $n$ where $p > c_4 \sqrt{\frac{\log n}{n}}$.\\
Putting (\ref{laplaciannorm}), Lemma~\ref{concentration}, and Lemma 7 of \citep{Negahban12} into equation (\ref{ieq:linfgeneral}), we get
\begin{align} \label{linfer_2}
\frac{\|w-\hat{w}\|_{\infty}}{\|w\|_{\infty}} 
\lesssim \left(1 + \frac{\sqrt{n}d_{\rm max}}{\gamma d_{\rm min}} \|\mathcal{L}^2\|_{2, \infty}\right) \sqrt{\frac{\log n}{Ld_{\rm max}}} \leq 73 \sqrt{\frac{\log n}{L(\frac{1}{2}np)}} \lesssim \sqrt{\frac{\log n}{npL}}.
\end{align}
Therefore, for reliable ranking, we need the condition as follows,
\begin{align}
\frac{n^2pL}{2} \geq c_6\frac{n \log n}{\Delta_K^2}.
\end{align}
for some constant $c_6$. Similarly, using Lemma~\ref{concentration}, and Lemma 7 of \citep{Negahban12}, condition of $L$ becomes as follows, which completes the proof of Theorem \ref{thm:ergraph}.
\begin{align}
L \geq \left\lceil c_{1}\frac{\log n}{d_{\rm max}}\left(\frac{d_{\rm max}}{\gamma d_{\rm min}}\right)^2 \right\rceil
\geq \left\lceil 24c_{1}\frac{\log n}{np} \right\rceil
= \left\lceil c_{5}\frac{\log n}{np} \right\rceil.
\end{align}


%




\appendix
\section*{Appendix A. Concentration of Degrees of Items (Lemma \ref{concentration})}
\label{apx:concentration}
Although it is clear that the empirical mean of a random variable converges to its true mean with high probability, we prove the following for rigorous proofs in this paper.

\begin{lemma} \label{concentration}
Suppose independent and identically distributed (i.i.d.) random variables $X_i$ follow Bernoulli($q$) and $q > c \frac{\log n}{n}$. Then, with probability at least $1-2n^{-\frac{3}{28}c}$,
\begin{align}
\frac{1}{2}nq \leq \sum_{i=1}^{n} X_i \leq \frac{3}{2}nq.
\end{align}
\end{lemma}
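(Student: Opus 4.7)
The plan is to apply a Bernstein-type concentration inequality to the sum $S_n := \sum_{i=1}^n X_i$, which has mean $\mathbb{E}[S_n] = nq$ and variance $\mathrm{Var}(S_n) = nq(1-q) \leq nq$. Since the $X_i$'s are bounded by $M = 1$, Bernstein's inequality yields, for any $t > 0$,
\begin{align}
\mathbb{P}\!\left[\,|S_n - nq| \geq t\,\right] \;\leq\; 2\exp\!\left(-\frac{t^2/2}{nq(1-q) + t/3}\right).
\end{align}

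The concrete choice is $t = \tfrac{1}{2}nq$, which corresponds exactly to the two-sided deviation window $[\tfrac{1}{2}nq, \tfrac{3}{2}nq]$ appearing in the statement. Plugging this in, the denominator is bounded above by $nq + \tfrac{1}{6}nq = \tfrac{7}{6}nq$, while the numerator equals $\tfrac{1}{8}n^2 q^2$. A short arithmetic simplification then gives
\begin{align}
\mathbb{P}\!\left[\,|S_n - nq| \geq \tfrac{1}{2}nq\,\right] \;\leq\; 2\exp\!\left(-\frac{3nq}{28}\right).
\end{align}

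Finally, invoking the hypothesis $q > c\frac{\log n}{n}$, i.e.\ $nq > c\log n$, the exponent becomes at most $-\frac{3c}{28}\log n$, so the right-hand side is at most $2n^{-3c/28}$. Rearranging the complement of the deviation event yields the two-sided bound $\tfrac{1}{2}nq \leq S_n \leq \tfrac{3}{2}nq$ with probability at least $1 - 2n^{-3c/28}$, which is exactly the claim.

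There is essentially no obstacle: the result is a textbook Bernstein application, and the only care needed is matching the constant $\tfrac{3}{28}$ by using the variance bound $q(1-q) \leq q$ together with the exact $t/3$ term from Bernstein (rather than the looser multiplicative Chernoff bound, which would yield the suboptimal constant $\tfrac{1}{12}$). One could alternatively derive the same result via the multiplicative Chernoff bound $\mathbb{P}[|S_n - nq| \geq \delta nq] \leq 2\exp(-\delta^2 nq/(2+\delta))$ with $\delta = 1/2$, which also produces the exponent $\tfrac{3nq}{28}$; I would pick whichever form is already used elsewhere in the paper for notational consistency.
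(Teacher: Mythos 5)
Your proof is correct and follows essentially the same route as the paper: both apply Bernstein's inequality to $\sum_i X_i$ with the variance bound $nq(1-q)\leq nq$, choose $t=\tfrac{1}{2}nq$, and obtain the exponent $\tfrac{3}{28}nq \geq \tfrac{3c}{28}\log n$ from the hypothesis $q > c\tfrac{\log n}{n}$. The arithmetic matching the constant $\tfrac{3}{28}$ is exactly as in the paper's Appendix A.
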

\begin{proof}
Applying the Bernstein inequality, we get
\begin{align}
\mathbb{P} \left[ \left| \sum_{i=1}^{n}X_i - nq \right| > t \right]
& \leq 2 \exp \left( - \frac{\frac{1}{2}t^2}{ nq + \frac{1}{3} t } \right).
\end{align}
Then we choose $t = \frac{1}{2}nq$ and use $ q > c \frac{\log n}{n} $, to get the following tail probability, which completes the proof.
\begin{align}
\mathbb{P} \left[ \left| \sum_{i=1}^{n}X_i - nq \right| > \frac{1}{2}nq \right] \leq 2 n^{-\frac{3}{28} \frac{nq}{\log n} } < 2 n^{-\frac{3}{28}c}.
\end{align}
\end{proof}


\vskip 0.2in

\bibliography{JMLR_TopKDense}
\bibliographystyle{natbib}

\end{document}